\newcommand{\ceil}[1]{\left \lceil{#1}\right \rceil }
\newcommand{\R}{\mathbb R}
\newcommand{\mB}{\mathcal B}
\newcommand{\mQ}{\mathcal Q}
\newcommand{\mS}{\mathcal S}
\newcommand{\mP}{\mathcal P}
\newcommand{\mN}{\mathcal N}
\newcommand{\tr}{\mathbf{tr}}
\newcommand{\dom}{\mathbf{dom}}
\newcommand{\LMO}{\mathbf{LMO}}
\newcommand{\conv}{\mathbf{conv}}
\newcommand{\res}{\mathbf{res}}
\newcommand{\supp}{\mathbf{supp}}
\newcommand{\range}{\mathbf{range}}
\newcommand{\mb}[1]{\mathbf{#1} }
\newcommand{\minimize}[1]{\underset{#1}{\mathrm{minimize}}}
\newcommand{\maximize}[1]{\underset{#1}{\mathrm{maximize}}}
\newcommand{\argmin}[1]{\underset{#1}{\mathrm{argmin}}}
\newcommand{\argmax}[1]{\underset{#1}{\mathrm{argmax}}}
\newcommand{\sign}{\mathbf{sign}}
\newcommand{\gap}{\mathbf{gap}}
\newcommand{\inte}{\mathbf{int}}
\newcommand{\diam}{\mathbf{diam}}
\newcommand{\ri}{\mathbf{ri}}
\newcommand{\bmat}{\left[\begin{matrix}}
\newcommand{\emat}{\end{matrix}\right]}
\newtheorem{assumption}{Assumption}
\newtheorem{corollary}{Corollary}
\newtheorem{property}{Property}
\newtheorem{lemma}{Lemma}
\newtheorem{theorem}{Theorem}
\newenvironment{customprop}[1]
  {\innercustomprop}
  {\endinnercustomprop}
\newenvironment{customthm}[1]
  {\innercustomthm}
  {\endinnercustomthm}
\newenvironment{customlem}[1]
  {\innercustomlem}
  {\endinnercustomlem}
\theoremstyle{definition}
\newtheorem{definition}{Definition}
\theoremstyle{remark}
\newtheorem*{remark}{Remark}
\newcommand*\samethanks[1][\value{footnote}]{\footnotemark[#1]}
\begin{document}

\title{Safe Screening  for the Generalized Conditional Gradient Method}

\author[1]{Yifan Sun\thanks{
This work was funded in part by the French government under management of Agence Nationale de la Recherche as part of the ``Investissements d'avenir" program, reference ANR-19-P3IA-0001 (PRAIRIE 3IA Institute). We also acknowledge
support the European Research Council (grant SEQUOIA 724063).}
\thanks{This author is also funded in part  by AXA pour la recherche and Kamet Ventures, as well as a Google focused award.}
\thanks{email: yifan.sun@inria.fr}}
\author[1]{Francis Bach  \samethanks[1]
 \thanks{email: francis.bach@inria.fr}}
\affil[1]{INRIA - D\'epartement d'Informatique de l'Ecole Normale Sup\'erieure
PSL Research University Paris, France}
\date{February 2020}


\maketitle

\begin{abstract}
The conditional gradient method (CGM) has been widely used for fast sparse approximation, having a low per iteration computational cost for structured sparse regularizers. We explore the sparsity acquiring properties of a generalized CGM (gCGM), where the constraint is replaced by a penalty function based on a gauge penalty; this can be done without significantly increasing the per-iteration computation, and applies to general notions of sparsity. 
Without assuming bounded iterates, we show $O(1/t)$ convergence of the function values and gap of gCGM. We couple this with a safe screening rule, and show that at a rate $O(1/(t\delta^2))$, the screened support matches the support at the solution, where $\delta \geq 0$ measures how close the problem is to being degenerate.
In our experiments, we show that the gCGM for these modified penalties have similar feature selection properties as common penalties, but with potentially more stability over the choice of hyperparameter. 
\end{abstract}

\section{Introduction}
The conditional gradient method (CGM)  is an iterative method with a particularly cheap per-iteration cost, and is thus favored in large-scale machine  learning applications. 
A generalized  CGM (gCGM) minimizes over the regularized convex problem 
\begin{equation}
\minimize{x\in \R^d}\; f(x) + h(x),
\label{eq:main0}
\end{equation}
where $f$ is a smooth convex function and $h$ promotes structural properties. 
At each iteration, the method updates the primal variable $x^{(t)}$  as
\begin{eqnarray}
s^{(t)} &=& \argmin{s \in \R^d}\; \nabla f(x^{(t)})^Ts + h(s)\label{eq:step:genlmo}\\
x^{(t+1)} &=& (1-\theta^{(t)}) x^{(t)} + \theta^{(t)} s^{(t)},\label{eq:step:update}
\end{eqnarray}
where $\theta^{(t)} \in [0,1]$ is a pre-determined decaying sequence. 
If $h = \iota_\mP$ the indicator function for a compact convex set~$\mP$, then this iteration scheme reduces to the vanilla CGM (vCGM) for constrained optimization; the main extension in gCGM is to solve unconstrained (but penalized) problems, where the iterates are not forced to stay within a specified bounded set. 
Specifically, 
we consider $h(x) = \phi(\kappa_\mP(x))$, where $\phi:\R_+\to\R$  is a monotonically nondecreasing function and $\kappa_\mP$ is the  gauge penalty function induced by ``nice'' sets $\mP$; overall, this penalty encourages sparsity in the minimizer $x^*$ with respect to the extremal vertices of $\mP$. 

\subsection{Related work}
\paragraph{Applications.}
A main use case of CGMs is in finding generalized sparse solutions to convex losses \cite{jaggi2013revisiting,chandrasekaran2012convex}, where the $\ell_1$-norm penalty in promoting element-wise sparsity \cite{tibshirani1996regression, donoho2006compressed,candes2005decoding,candes2006robust} is generalized to gauge functions $\kappa_\mP$ that promote sparsity with respect to  ``atoms", which are the lowest dimensional facets of a convex set $\mP$. 
This generalizes sparse optimization to applications such as  low-rank matrix optimization \cite{yu2017generalized,freund2017extended} and grouped feature extraction \cite{vinyes2017fast,zeng2014ordered,bondell2008simultaneous}.
Additionally, these atoms may be feasible solutions to combinatorial problems, and \eqref{eq:main0} may be a convex relaxation, such as in submodular optimization \cite{bach2010structured} and  object tracking \cite{chari2015pairwise}. 
Other machine learning applications involving the CGM include  
 graphical models \cite{krishnan2015barrier},
multitask learning \cite{sener2018multi}, 
SVMs  \cite{lacoste2012block}, particle filtering \cite{lacoste2015sequential},  
and deep learning \cite{ping2016learning, berrada2018deep}.

\paragraph{Safe screening.}  Safe screening rules for LASSO were first proposed by \citet{ghaoui2010safe}, and have since been extended to a number of smooth losses and generalized penalties \cite{fercoq2015mind, xiang2012fast, wang2014safe, liu2013safe,malti2016safe,raj2016screening,ndiaye2015gap,wangjie,bonnefoyantoine,zhouzhao}.  Rules for ``group'' testing \cite{herzet2018joint} and sample screening \cite{shibagaki2016simultaneous, ogawa2013safe} have also been considered. An interesting related work is the ``stingy coordinate descent'' method \cite{johnson2017stingy} for LASSO, which optimizes the sparse regularized problem in a CGM-like manner, but uses screening to dynamically skip steps; this kind of methods can be extended to gCGM as well for generalized atoms.

A key challenge in penalized sparsity problems is that when the dual is constrained, the corresponding dual variable  may not be feasible, and thus the computed gap is $+\infty$. In this context, gap-safe screening  methods offer a number of solutions, such as scaling or projecting to acquire a dual feasible candidate. We do not attempt to remedy this problem; in fact, in gCGM, the typical LASSO penalty presents a fundamental implementation issue, in that if $h(x) = \|x\|_1$, then the problem \eqref{eq:step:genlmo} can easily  be unbounded. By requiring  curvature of $\phi(\xi)$ for large enough $\xi$, we  ensure that the dual problem is unbounded, and the natural dual candidate $z^{(t)} = -\nabla f(x^{(t)})$ does not need to be adjusted to ensure bounded subproblems \eqref{eq:step:genlmo}.
This ensures that gCGM is well-defined and converging to the solution; additionally, it allows easier gap calculations.
These curvature conditions will be elaborated in later sections.   

\paragraph{Conditional gradient methods.}  
The vCGM, also called the Frank-Wolfe method \cite{frank1956algorithm,dunn1978conditional}, considers minimizing \eqref{eq:main0} as a constrained optimization problem (where $h(x) =\iota_{C\mP}(x)$ for some scaling $C > 0$). The method is particularly  useful when computing the supporting hyperplane in \eqref{eq:step:genlmo} is computationally simple (e.g., when $\mP$ is the unit ball of the $\ell_1$-norm or the nuclear norm). 
Thus, CGM is widely considered in the context of generalized sparse optimization
\cite{hazan2008sparse,clarkson2010coresets,jaggi2013revisiting,tewari2011greedy}, with many variations such as backward steps \cite{lacostejulienjaggi, nikhilforwardbackward} and fully-corrective steps \cite{vonhohenbalken}, and connections to other methods like mirror descent \cite{bach2015duality}, cutting plane method \cite{limitedkelley}, and greedy coordinate-wise methods \cite{clarkson2010coresets}.

In comparison, gCGM (where $h(x)$ may be unconstrained) has been much less studied, and has appeared under different names, like regularized coordinate minimization \cite{dudik2012lifted}. 
An $O(1/t)$ convergence rate has been shown for specific smooth functions \cite{mu2016scalable},  with bounded assumptions on iterates  \cite{bach2015duality}, or with improvement steps to ensure boundedness of sublevel sets \cite{yu2017generalized,harchaoui2015conditional}.
When $f$ is quadratic and for a special form of $\phi$, the gCGM can be shown to be equivalent to a form of the iterative shrinkage method, and under proper problem conditioning, has linear convergence
\cite{bredies2009generalized,bredies2008iterated}.
We also give an $O(1/t)$ convergence rate on objective function values and minimum gap convergence, but relinquish any assumption on boundedness of iterates.


\subsection{Contributions}
 We analyze the \emph{convergence} and \emph{support recovery} properties of the gCGM for \eqref{eq:main0}, where $h(x) = \phi(\kappa_\mP(x))$ involves only modifications of a gauge function $\kappa_\mP(x)$.
 We assume that the loss function $f$ is $L$-smooth, the function $\phi(\xi)$ grows at least quadratically when $\xi$ is large, and the set $\mP$ is convex and compact. Our contribution is threefold.
\begin{itemize}
\item  \emph{Without boundedness assumptions on iterates},  the function value error and minimum duality gap of  gCGM converge as $O(1/t)$. 

\item We provide a safe dual screening rule for any intermediate variable $x$. This rule is algorithmically agnostic, and generalizes SAFE screening rules for LASSO to any gauge function and any case where $\phi$ is monotonicaly nondecreasing, in particular to cases where the dual is unconstrained and thus always feasible.

\item Finally, by bounding the gradient error with the gap, we give a mechanism for deriving manifold identification rates for any version of gCGM where minimum gap rates are known. 
\end{itemize}
Additionally, our proof technique is from a convex analysis viewpoint, in that we measure all distances and errors in terms of gauges and support functions of $\mP$ (sometimes symmetrized).  
This is done for two reasons: first, to ensure that all analysis is linear invariant (in similar spirit as \citet{lacostejulienjaggi}); and second, for increased interpretability, as connections can be drawn to the much more intuitive (but restrictive) case of  $\kappa_\mP = \|\cdot\|_1$ in sparse optimization (and more commonly considered in screening literature). 
All proofs are given in the appendix.

\section{Preliminaries}
\label{sec:prelim}

\subsection{Generalized sparse optimization}

Define a finite set of points $\mP_0 = \{p_1,...,p_m\}\subset \R^d$, and its convex hull as $\mP = \conv(\mP_0)$; since $m$ is finite, $\mP$ is a  convex and compact set. 
We consider problems of the form
\begin{equation}
\minimize{x\in\R^d} \; f(x) + \phi(\kappa_\mP(x)),
\label{eq:main}
\end{equation}
where $\phi:\R_+\to\R_+$ is a  monotonically nondecreasing function. 
The function 
\begin{equation}
\kappa_\mP(x) = \min_{c_i\geq 0}\left\{\sum_{i=1}^m c_i : \sum_{i=1}^m c_ip_i = x\right\}
\label{eq:gauge}
\end{equation}
is the \emph{gauge function} of $\mP$; in particular, it measures the ``size" of $x$ by giving how much the set $\mP$ must be expanded (or can be contracted) to include $x$, and generalizes norms to any positive homogenous and subadditive function \cite{freund1987dual,chandrasekaran2012convex}. 
We define the \emph{support of $x$ with respect to $\mP_0$} (denoted $\supp_\mP(x)$) as the set of $p_i$'s in \eqref{eq:gauge} for which $c_i > 0$. Such a set may not be uniquely defined, but we consider \emph{support recovery} achieved if one such set is revealed.

Gauge functions can be seen as generalized versions of the $\ell_1$-norm, which is a convex promoter of nonzero vector sparsity.
In particular, if $\mP_0 = \{\pm e_k\}_{k=1}^d$ is the signed standard basis, then we exactly recover $\kappa_\mP(x) = \|x\|_1$. 
More generally, if $\mP_0$ contains $d$ vectors spanning $\R^d$, then defining the matrix $P = (p_1,...,p_d)$,  $\kappa_\mP(x) = \|P^{-1} x\|_1$, and promotes  vectors $x = Pc$ whose pre-image $c$ is sparse. But gauges also encompass more general scenarios, such as seminorms (e.g., total variation norm), non-polyhedral norms (e.g., nuclear norm), and conic constraints; they can also be manipulated to include ordering, such as with the OWL norm \citep{zeng2014ordered}, and discover groupings with the OSCAR norm \citep{bondell2008simultaneous}.

A ``dual gauge'' can be constructed as the support function 
\begin{equation}
\sigma_\mP(z) := \sup_{s\in \mP} z^Ts.
\label{eq:supportfn}
\end{equation}
In particular, if $\kappa_\mP$ is a norm, then $\sigma_\mP$ is the usual dual norm. Finding an optimal variable  $s$ in \eqref{eq:supportfn} is key in computing  \eqref{eq:step:genlmo}, and properties of $z$ can be used to reveal the support of $s$ with respect to $\mP$.

\begin{property}[Support optimality condition]
\label{prop:support_opt}
If $p_i$ is in the support of $x^*$ a minimizer of \eqref{eq:main}, then   
\[
-\nabla f(x^*)^Tp_i = \sigma_\mP(-\nabla f(x^*)).
\]
\end{property}

\paragraph{Example: $\ell_1$ norm.}  Consider the problem 
\[
\minimize{x} \quad \underbrace{f(x) + \frac{1}{2}\|x\|_1^2}_{g(x)}.
\]
In this case, $\sigma_\mP = \|\cdot\|_\infty$ is the dual norm of $\kappa_\mP = \|\cdot\|_1$.
Then, by setting the optimality condition $0\in \partial g(x^*)$ and decomposing by index, at optimality
\[
\begin{cases}
(-\nabla f(x^*))_i = \|x^*\|_1  \,\sign(x^*_i) &\text{ if } x_i^* \neq 0, \\
(-\nabla f(x^*))_i  \in \|x^*\|_1\, [-1,1] &\text{ if } x_i^* = 0.
\end{cases}
\]
In words, the gradient of $f$ along a coordinate for which the optimal variable is nonsmooth with respect to $\kappa_\mP$ is allowed  ``wiggle room''; in contrast, if $g(x)$ is smooth in the direction of $x_i$ then the gradient is fixed. In terms of support recovery,  $\max_i |z^*_i| = \|x^*\|_1$ and additionally, if ${|z^*_i| < \|x^*\|_1}$ then it must be that $x_i^* = 0$.

More generally, visually,   the condition $p_i^Tz^* = \sigma_\mP(z^*)$ says that at the optimum, the gradient in the direction of $p_i$ is as steep as allowable; $x^*$ wants to keep going in this direction, but is  blocked because of a constraint or nonsmooth penalty. For gauges, this non-smoothness only happens when the contribution of $p_i$ in $x$ is 0, thus translating to a support recovery property.

The proof follows from convex analysis principles describing the dual behaviors of $\kappa_\mP(x^*)$ and $\sigma_\mP(-\nabla f(x^*))$.
The property itself serves as the main principle behind dual screening methods; by identifying $p_i$'s that are sufficiently far from the maximum value, we can guess that such $p_i$'s do not appear in the support of $x^*$.


\paragraph{Noncompact $\mP$.} In practice, recession directions in $\mP$ may be desirable to allow for unpenalized directions. For example, in the total variation norm, which promotes smoothness, $\kappa_\mP(x) = 0$ if $x = \beta\mb 1$. In this case, a finite $\sigma_\mP(z)$ constrains $z$ to be in the nullspace of all such recession directions. In gCGM,  such gauges  are problematic because the solution to the generalized subproblem \eqref{eq:step:genlmo} is unbounded if $\nabla f(x)^Tc \neq 0$ for any $c$  in a recession direction. Therefore  we assume $\mP$ to be compact.

\paragraph{$0$ on the boundary of $\mP$.} It may be desirable to have $\kappa_\mP$ partially enforce conic constraints as well, such as in semidefinite optimization where $\kappa_\mP(X) = \tr(X) + \iota_{\cdot \succeq 0}(X)$ promotes low-rank positive semidefinite matrices. In this case, since no negative definite elements are in $\mP$, $0$ must be on the boundary of $\mP$.  In the dual, this corresponds to a recession direction, as any negative definite matrix $Z$ necessarily has $\sigma_\mP(Z) = 0$. 
This scenario does not affect the effectiveness nor analysis of gCGM; in particular, if \eqref{eq:step:genlmo} ever returns $s = 0$, then optimality is achieved.

\paragraph{Infinite atomic sets.} We assume that $\mP_0$ is a finite set. In low-rank matrix completion, for which CGMs are frequently used, the nuclear norm acts as the gauge function over the set of rank-1 norm-1 matrices, which is a compact but uncountably infinite set. In fact, the gCGM is still well-defined in this case, and all of the results in this paper are consistent. However,  since as there are no isolated points in $\mP_0$, it is impossible to guarantee finite-time exact support recovery (and in fact $\delta$ defined below is always 0). Thus, although safe screening rules do apply in this case, without modification they may not provide  practical advantages.

Gauges and support functions for convex sets are fundamental objects in convex analysis, and are discussed more by \citet{rockafellar1970convex,borweinlewis,freund1987dual,friedlander2014gauge}.

\subsection{Duality}
The Fenchel dual of \eqref{eq:main} can be computed as
\begin{equation}
\maximize{z\in\R^d} \quad -f^*(-z) - \phi^*(\sigma_\mP(z)),
\label{eq:main-dual}
\end{equation}
where for any convex function $f$, its convex conjugate is $f^*(z) = \sup_{x \in \R^d} z^Tx - f(x)$. 
Given $f$ differentiable, at optimality ${z^* = -\nabla f(x^*)}$.

For any $x,z\in \R^d$, the duality gap
\[
\gap(x,z) := f(x) + \phi(\kappa_\mP(x)) + f^*(-z) + \phi^*(\sigma_\mP(z)) 
\]
is nonnegative and $0$ only at optimality. 
Since at optimality $z^* = -\nabla f(x^*)$, a reasonable measure of suboptimality for a nonoptimal $x$ is $\gap(x,-\nabla f(x))$. 
In particular, 
\[
\gap(x,-\nabla f(x)) = (-\nabla f(x))^T(s-x) + \phi(\kappa_\mP(x)) -\phi(\kappa_\mP(s))
\]
can be used as a computable residual measure for both convergence tracking and screening rules; here, 
\[
s = \argmin{s' \in \R^d}\;\nabla f(x)^Ts' + \phi(\kappa_\mP(s'))
\]
is already computed in each step of the gCGM. 
When ${\phi = \iota_{\cdot \leq 1}}$ (the vCGM case) the gap calculation is much simpler, reducing to
\[
\gap(x,-\nabla f(x)) = (-\nabla f(x))^T(s-x).
\]

\subsection{Generalized CGM (gCGM)}

There are many ways of solving problems of the form \eqref{eq:main}, and our dual screening results and manifold identification results are in fact method-agnostic. 
Here, we investigate the gCGM, which  has almost as cheap of a per-iteration cost as the vCGM.
In particular, if we decompose $s$ in terms of its gauge value $\xi$ and normalized direction $\hat s$, then their minimizations can be done independently. 
Explicitly, step~\eqref{eq:step:genlmo} can be summarized in two steps, with $s = \xi\cdot\hat s$, and
\begin{equation}
\begin{array}{rcl}
\hat s &=& \LMO_\mP(-\nabla f(x)),\\
\xi&=& \argmin{\xi \geq 0}\; -\xi \cdot \sigma_\mP(-\nabla f(x)) + \phi(\xi),
\end{array}
\label{eq:genlmo}
\end{equation} 
where $\LMO_\mP(z) := \argmax{s\in \mP}\; x^Tz$ is the usual linear maximization oracle (LMO). For any compact set $\mP$, the LMO returns a finite $\hat s$; however, the minimization for $\xi$ is more complicated.  As a simple example, consider gCGM applied to the one-dimensional problem 
\[
\minimize{x} \quad \underbrace{\frac{1}{2}(x-c)^2}_{f(x)} + \underbrace{|x|}_{h(x)}.
\]
At the very first step, $f'(0) = -c$, and if $|c| > 1$ then $\xi$ is unbounded. Therefore, further conditions on $\phi$ must be imposed.

\subsection{Generalized penalty} 
The function $\phi:\R_+\to\R_+$ facilitates the transition of~\eqref{eq:main} from penalized to constrained optimization. When $\phi(\xi) = \xi$, then \eqref{eq:main} is a typical sparse regularized problem; at the other extreme, $\phi(\xi) = \iota_{\xi \leq C}$ an indicator function can constrain $x\in C\mP$, reducing everything to the vCGM case (vanilla CGM).

\begin{assumption}[Well-defined gCGM]
\label{asspt:phi1}
The function  ${\phi:\R_+\to\R_+}$  is monotonically nondecreasing over all $\xi \geq 0$.
Moreover, the set of subdifferentials of $\phi$ is not upper bounded:
\begin{equation}
\sup\;\{\alpha : \alpha \in \partial \phi(\xi)\} \; \overset{\xi\to+\infty}{\to}  \; + \infty.
\label{eq:subdiff_upper}
\end{equation}

\end{assumption}
\begin{assumption}[Convergence of gCGM]
\label{asspt:phi2}
The function  $\phi:\R_+\to\R_+$ is lower bounded by a quadratic function
\begin{equation}
\phi(\xi) \geq \mu_\phi \xi^2-\phi_0,
\label{eq:asspt-quadgrowth}
\end{equation}
for some $\mu_\phi > 0$ and  $\phi_0$. 
\end{assumption}

\begin{property}[Well-defined and converging gCGM]
\label{prop:phicond}
{$\;$}
Assumption \ref{asspt:phi1} ensures that the conjugate function
\begin{equation}
\phi^*(\nu) := \sup_{\xi \geq 0} \; \nu\xi - \phi(\xi)
\label{eq:phiconjdef}
\end{equation}
is finite-valued and attained for all $\nu \geq 0$. Moreover, there always exists a finite maximizer $\xi$.
 
Assumption \ref{asspt:phi2} further ensures that the derivative of $\phi^*$ is asymptotically nonexpansive; e.g. for some finite-valued $\xi_0$,
\[
(\phi^*)'(\nu) \leq \frac{\nu}{\mu_\phi}+\xi_0.
\]
\end{property}

\paragraph{Example: Monomials.}
For $1 \leq \alpha,\beta \leq +\infty$, the following $\phi:\R_+\to\R_+$ and $\phi^*:\R_+\to\R_+$ form a conjugate pair:
\[
\phi(\xi) = \frac{1}{\alpha} \xi^{\alpha}, \qquad \phi^*(\nu) = \frac{1}{\beta}\nu^\beta, \qquad \frac{1}{\alpha} + \frac{1}{\beta} = 1.
\]
In particular, in the case that $\alpha = 1$, then $\beta \to +\infty$, and the function 
\[
\phi^*(\nu) = \lim_{\beta\to+\infty} \frac{1}{\beta}\nu^\beta = 
\begin{cases}
0, & \nu \leq 1\\
+\infty, & \nu > 1.
\end{cases}
\]
As shown earlier, when $\alpha = 1$ then whenever $\nu > 1$ then $\phi^*(\nu) = +\infty$; we exclude this case as gCGM will not converge in this case. 
When $\alpha \geq 2$, $\phi$ is strongly convex and we can show $O(1/t)$ convergence of gCGM.
When  $1 < \alpha < 2$, $\phi^*(\nu)$ is finite and the iterates are well-defined, but  the method may converge or diverge.


 \paragraph{Example: Barrier functions.}
Consider 
\begin{equation}
\phi(\xi) = -\frac{1}{\beta}\log(C-\xi) - \frac{\xi}{C\beta} + \frac{\log(C)}{\beta},
\label{eq:logbarrier}
\end{equation}
which is a log-barrier penalization function for $\xi \leq C$; as $\beta \to +\infty$, $\phi(\xi)$  approaches the indicator function for this constraint. Its conjugate is 
\[
\phi^*(\nu) = C\nu   - \beta^{-1} \log(C\beta\nu+1),
\]
achieved at $\xi =C^2\beta\nu/(C\beta \nu+1)$. For all $C > 0, \beta > 0$, and $\nu\neq -(C\beta)^{-1}$, both $\phi^*$ and $\xi^*$ exist and are finite. Note also the implicit constraint, as $\phi(\kappa_\mP(x))$ is finite only if $x\in C\mP$.

\subsection{Generalized smoothness}
\begin{definition}
A function $f:\R^d\to\R$ is  $L$-smooth 
 with respect to $\mP$ if for all $x,y \in \R^d$:
\begin{equation}
f(x) - f(y) \leq (\nabla f(y))^T(x-y) + \frac{L}{2}\kappa_{\mP}(x-y)^2.
\label{eq:ass:smoothness}
\end{equation} 
\end{definition}
The purpose of this generalized notion is that sometimes, given the data, tighter bounds can be computed~\citep[see, e.g.,][]{nutini2015coordinate}.
\paragraph{Example: Quadratic function.} Suppose that 
\[
f(x) = \frac{1}{2} \|Ax\|_2^2 + b^Tx.
\]
Then 
\[
L = 
\begin{cases}
L_1 := (\max_i \|A_{:,i}\|_2)^2, & \kappa_\mP = \|\cdot\|_1\\
L_2 := \|A\|_2^2, & \kappa_\mP = \|\cdot\|_2\\
L_\infty := (\sum_i \|A_{:,i}\|_2)^2, & \kappa_\mP = \|\cdot\|_\infty.
\end{cases}
\]
While norm bounds would give  ${d^2L_{1} \geq  dL_{2} \geq  L_{\infty}}$, the actual values in $A$ might lead to tighter inequalities.
\paragraph{Example: Linear model.} Suppose that 
\[
f(x) = \frac{1}{n}\sum_{i=1}^n g(a_i^Tx),
\]
for some convex, smooth twice-differentiable function $g$ (e.g., logistic or exponential regression). 
Then 
\[
L = \left(\sup_{w \in \R} g''(w)\right)\left( \sup_{v\in \mP}\|Av\|_2^2\right).
\]

\paragraph{Equivalence to usual smoothness.} Suppose that $f$ is $L_2$-smooth in the usual sense (with respect to $\|\cdot\|_2$).  Then since $\diam(\mP)\kappa_\mP \geq \|x\|_2$, it follows that $L \leq \diam(\mP) L_2$. In this way, we refine the analysis of gCGM by absorbing the usual ``set size'' term into $L$, which in certain cases may be much smaller than $\diam(\mP) L_2$.

\subsection{Invariance}

One appealing feature of the vCGM is that the iteration scheme and analysis can be done in a way that is 
invariant to both linear scaling and translation. Specifically, if $\mQ = A\mP + b$, and $f(x) = g(Ax+b)$, then the two problems 
\[
\minimize{x\in \mP} \; f(x), \qquad \minimize{w\in \mQ} \; g(w)
\]
are equivalent. However when the gauge function is not used as an indicator, this translation invariance vanishes; in general, $\kappa_\mP(x) \neq \kappa_{\mP+\{b\}}(x+b)$. Therefore the generalized problem formulation \eqref{eq:main} is only linear (not translation) invariant; thus our analysis only maintains this invariance as well.

\begin{property}[Invariance]
\label{prop:invariance}
Consider two equivalent problems where $f(x) = g(Ax)$ and $\mQ = A\mP$:
\[
\mathrm{(P1)} \quad \minimize{x} \quad f(x) + \phi(\kappa_\mP(x)), 
\]
\[\mathrm{(P2)} \quad \minimize{w} \quad g(w) + \phi(\kappa_\mQ(w)).
\]
For any $x$, $w = Ax$,
\begin{itemize}
\item  $x$ optimizes (P1) $\iff$ $w$ optimizes (P2),
\item 
$\kappa_\mP(x) = \kappa_\mQ(w)$,

\item $\sigma_{\mP}(-\nabla f(x))  = \sigma_{\mQ}(-\nabla g(w)$,
\item 
$\LMO_\mQ(-\nabla g(w)) = A\;\LMO_\mP(-\nabla f(x))$,
\item 
 $f$ is $L$-smooth with respect to $\mP$ if and only if $g$ is $L$-smooth with respect to $\mQ$, 

\item  and $\gap(x,-\nabla f(x)) = \gap(w,-\nabla g(w))$. 
 \end{itemize}
\end{property}

\section{Main results}
\label{sec:mainresults}

In this section we give the main theoretical contributions: convergence rate, dual screening rule, and support identification complexity. These results all derive from some simple observations:
\begin{itemize}
\item The minimum duality gap at $x^{(t)}$ converges to 0 as $x^{(t)}\to x^*$ an optimal primal variable.  
\item The gradient error can be upper bounded by the gap, and support recovery is guaranteed when it is smaller than a problem-dependent constant,  which is difficult to compute in practice.
\item Without knowing this constant, one can still give partial support guarantees, which is used to construct screening rules.
\end{itemize}
We now state these points formally; all proofs are given in the appendix.

\begin{theorem}[Convergence]
\label{th:convergence}
Suppose that $x^{(t)}$ are the iterates of gCGM for which $f$ is $L$-smooth  with respect to $\widetilde \mP:=\mP\cup -\mP$,  $\phi:\R_+\to\R_+$ is monotonically nondecreasing, and satisfies Assumptions \ref{asspt:phi1} and \ref{asspt:phi2} for some $\mu_\phi > 0$.  Take $\theta^{(t)} = 2/(t+1)$. Then 
\[
 f(x^{(t)}) - f(x^*)  = O(1/t),
\]
and
\[
\min_{i\leq t} \gap(x^{(i)},-\nabla f(x^{(i)})) = O(1/t).
\]
\end{theorem}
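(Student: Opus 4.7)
The plan is to adapt the classical Frank--Wolfe $O(1/t)$ induction to the setting where iterates are not confined to $\mP$; the new ingredient is controlling the effective diameter $D_t := \kappa_{\widetilde\mP}(s^{(t)}-x^{(t)})$ via the quadratic lower bound on $\phi$ (Assumption~\ref{asspt:phi2}). Let $e_t := F(x^{(t)}) - F(x^*)$ where $F = f + \phi\circ\kappa_\mP$. The $L$-smoothness of $f$ applied along $[x^{(t)},s^{(t)}]$ together with convexity of $\phi\circ\kappa_\mP$ applied to the convex update give the descent
\[
e_{t+1} \leq e_t - \theta_t\,\gap(x^{(t)},-\nabla f(x^{(t)})) + \tfrac{L}{2}\theta_t^2 D_t^2,
\]
and weak duality $\gap(x^{(t)},-\nabla f(x^{(t)}))\geq e_t$ yields $e_{t+1}\leq (1-\theta_t)e_t + \tfrac{L}{2}\theta_t^2 D_t^2$.

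The heart of the proof is a bound $D_t^2 \leq \alpha e_t + \beta$ for constants depending only on $L,\mu_\phi,\phi_0,\phi(0),\nabla f(0),F(x^*)$. First, the quadratic lower bound turns $\phi(\kappa_\mP(x^{(t)})) \leq F(x^{(t)})-\min f$ into $\kappa_\mP(x^{(t)})^2 \leq (e_t+C_0)/\mu_\phi$. Second, comparing the subproblem value at $s^{(t)}$ against the trial point $s'=0$ gives $\phi(\kappa_\mP(s^{(t)})) \leq \phi(0) + \sigma_\mP(-\nabla f(x^{(t)}))\,\kappa_\mP(s^{(t)})$, and combining with the quadratic lower bound and Young's inequality yields $\kappa_\mP(s^{(t)})^2 \leq \sigma_\mP(-\nabla f(x^{(t)}))^2/\mu_\phi^2 + C_1$. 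Third, $L$-smoothness with respect to $\widetilde\mP$ gives $\sigma_{\widetilde\mP}(\nabla f(x^{(t)})-\nabla f(0))\leq L\,\kappa_{\widetilde\mP}(x^{(t)})$, so $\sigma_\mP(-\nabla f(x^{(t)}))\leq L\,\kappa_\mP(x^{(t)}) + C_2$ using $\sigma_\mP\leq \sigma_{\widetilde\mP}$ and $\kappa_{\widetilde\mP}\leq\kappa_\mP$. Chaining these estimates with the triangle bound $D_t\leq \kappa_\mP(s^{(t)}) + \kappa_\mP(x^{(t)})$ produces the desired $D_t^2\leq \alpha e_t+\beta$.

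Substituting this into the descent gives
\[
e_{t+1} \leq \bigl(1-\theta_t + \tfrac{L\alpha}{2}\theta_t^2\bigr)e_t + \tfrac{L\beta}{2}\theta_t^2.
\]
With $\theta_t = 2/(t+1)$, for all $t$ beyond some fixed $T_0$ the coefficient of $e_t$ is at most $1$, so summing $\sum \theta_t^2<\infty$ shows $e_t$ is uniformly bounded; consequently $D_t\leq D_{\max}$ uniformly, and the classical Frank--Wolfe induction $e_t\leq C/(t+1)$ closes with a standard argument. For the minimum-gap bound, reorganize the descent as $\theta_t \gap_t \leq e_t - e_{t+1} + \tfrac{L}{2}\theta_t^2 D_{\max}^2$ and sum over the window $t\in[\lceil T/2\rceil, T]$; since $\sum_{t=\lceil T/2\rceil}^T \theta_t \geq c>0$, $\sum_{t=\lceil T/2\rceil}^T \theta_t^2 = O(1/T)$, and the telescoping $e_{\lceil T/2\rceil}-e_{T+1}=O(1/T)$ by the already-established function-value rate, dividing gives $\min_{t\leq T}\gap_t = O(1/T)$.

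The main obstacle is the triple coupling required to obtain $D_t^2\leq \alpha e_t+\beta$: the quadratic growth of $\phi$ must simultaneously control $\kappa_\mP(x^{(t)})$ through the objective, control $\kappa_\mP(s^{(t)})$ through the subproblem against $\sigma_\mP(-\nabla f(x^{(t)}))$, and pair with smoothness of $f$ to absorb the resulting gradient norm back into $\kappa_\mP(x^{(t)})$. Without this circle of estimates, $D_t^2$ may grow with $t$ and the standard Frank--Wolfe recursion does not close without the usual boundedness assumption on iterates.
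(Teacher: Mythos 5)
Your argument is sound and reaches both rates, but it takes a genuinely different route from the paper's. The paper controls the step size term $\kappa(s^{(t)}-x^{(t)})$ by unrolling $x^{(t)}$ as a convex combination of all past atoms $s^{(u)}$ and bounding each $\kappa(s^{(u)})\leq \mu_\phi^{-1}\sigma(z^{(u)})+\nu_0$ via the nonexpansiveness of $(\phi^*)'$ together with a gradient-suboptimality lemma anchored at $x^*$ ($\sigma_{\widetilde\mP}(\nabla f(x)-\nabla f(x^*))^2\leq L\,(g(x)-g(x^*))$); this produces an \emph{averaged} past suboptimality $\bar\Delta^{(t)}$ in the one-step bound and forces a more delicate induction (Lemmas \ref{lem:onestepbound}--\ref{lem:objvalbound}), after which the gap rate is obtained by a contradiction argument over a window (Lemmas \ref{lem:gaphelper1}--\ref{lem:gapbound}). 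You instead bound the current diameter directly, $D_t^2\leq \alpha e_t+\beta$, using coercivity of $\phi\circ\kappa_\mP$ to control $\kappa_\mP(x^{(t)})$, the subproblem comparison at $s'=0$ to control $\kappa_\mP(s^{(t)})$ by $\sigma_\mP(-\nabla f(x^{(t)}))$, and the dual Lipschitz bound to fold the gradient back into $\kappa_\mP(x^{(t)})$; then boundedness of $e_t$, the classical Frank--Wolfe induction, and a direct window-summation for $\min_t\gap_t$. Your scheme is more elementary and avoids the averaged-history bookkeeping entirely; the paper's scheme keeps all quantities expressed through suboptimalities anchored at $x^*$, which is what its constants are stated in terms of, but at the price of the $\bar\Delta^{(t)}$ machinery. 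Two small patches are needed in your chain. First, the step $\phi(\kappa_\mP(x^{(t)}))\leq F(x^{(t)})-\min f$ presumes $f$ is bounded below, which is not assumed (e.g.\ $f$ may contain a linear term); replace it with the convexity minorant $f(x)\geq f(x^*)-\sigma_{\widetilde\mP}(\nabla f(x^*))\left(\kappa_\mP(x)+\kappa_\mP(x^*)\right)$, after which the resulting linear term in $\kappa_\mP(x^{(t)})$ is absorbed by the quadratic growth of $\phi$, still giving $\kappa_\mP(x^{(t)})^2\lesssim e_t+\mathrm{const}$. Second, the bound $\sigma_{\widetilde\mP}(\nabla f(x)-\nabla f(0))\leq L\,\kappa_{\widetilde\mP}(x)$ does not follow from the one-sided quadratic upper bound alone; it uses convexity of $f$ through the expansiveness inequality (as in the smoothness-equivalences lemma of Appendix~\ref{appx:gensmooth}), so cite or reprove that step. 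With these repairs the induction and the window argument close exactly as you describe.
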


A key difference between this result and previous works is that we do not assume or enforce bounded iterates.

The scaled gradient error will serve as our primary ``residual quantity'' in measuring distance to support recovery:
\[
\res(x) := \sigma_{\widetilde\mP}(\nabla f(x)-\nabla f(x^*)),
\]
and the symmetrization $\widetilde\mP := \mP \cup -\mP$ ensures that ${\sigma_{\widetilde\mP}(z-z^*)=\sigma_{\widetilde\mP}(z^*-z)}$, bounding errors in both directions. 
\begin{lemma}[Gap bounds residual]
\label{lem:gapboundsres}
For any primal feasible variable $x$, 
\[
\res(x) \leq \sqrt{L\, \gap(x,-\nabla f(x))}.
\]
\end{lemma}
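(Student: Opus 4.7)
The plan is to split the bound into two steps:
\[
\res(x)^2 \;\lesssim\; L\cdot D_f(x, x^*) \;\lesssim\; L\cdot \gap(x, -\nabla f(x)),
\]
where $D_f(x, x^*) := f(x) - f(x^*) - \nabla f(x^*)^T(x - x^*) \geq 0$ is the Bregman divergence of $f$ at $(x,x^*)$. This reduces the lemma to a standard ``smooth $\Rightarrow$ PL'' argument on one side and a weak-duality argument on the other.

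For the first inequality, I would introduce the tilted function $q(y) := f(y) - \nabla f(x^*)^T y$. Since tilting by a linear term preserves smoothness and convexity, $q$ is convex and $L$-smooth with respect to $\widetilde\mP$, and by the optimality condition $\nabla q(x^*) = 0$ so $x^*$ is a minimizer of $q$. By compactness of $\widetilde\mP$, there exists $d \in \widetilde\mP$ (so $\kappa_{\widetilde\mP}(d)\le 1$) with $\nabla q(x)^T d = -\sigma_{\widetilde\mP}(\nabla q(x))$. Applying the smoothness inequality to $y = x + t d$ and minimizing the resulting quadratic over $t \geq 0$ yields the gauge analogue of the standard PL-type bound
\[
q(x) - q(x^*) \;\geq\; \frac{1}{2L}\,\sigma_{\widetilde\mP}(\nabla q(x))^2.
\]
Since $q(x) - q(x^*) = D_f(x, x^*)$ and $\nabla q(x) = \nabla f(x) - \nabla f(x^*)$, the left-hand side is $D_f(x,x^*)$ and the right-hand side is $\res(x)^2/(2L)$.

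For the second inequality, I would use weak duality. Because $\phi$ satisfies Assumption~\ref{asspt:phi1}, the dual \eqref{eq:main-dual} is unconstrained, so $z = -\nabla f(x)$ is always dual-feasible and
\[
\gap(x, -\nabla f(x)) \;\geq\; f(x) + \phi(\kappa_\mP(x)) - f(x^*) - \phi(\kappa_\mP(x^*)).
\]
Adding and subtracting $\nabla f(x^*)^T(x - x^*)$, the right-hand side decomposes as $D_f(x,x^*) + R$ with
\[
R \;=\; \phi(\kappa_\mP(x)) - \phi(\kappa_\mP(x^*)) + \nabla f(x^*)^T(x-x^*).
\]
First-order optimality for \eqref{eq:main} gives $-\nabla f(x^*) \in \partial(\phi\circ\kappa_\mP)(x^*)$, and the subgradient inequality applied to $\phi\circ\kappa_\mP$ at $x^*$ shows $R \geq 0$. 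Chaining the two steps gives $\res(x)^2 \leq 2L\,\gap(x,-\nabla f(x))$, i.e.\ the claimed bound up to an absolute constant.

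The main obstacle is carrying the PL-type step out cleanly in the gauge setting rather than the Hilbertian one: the argument relies on compactness of $\widetilde\mP$ to produce an extremal direction realizing $\sigma_{\widetilde\mP}(\nabla q(x))$, and on the symmetrization $\widetilde\mP = \mP \cup -\mP$ to ensure that such a descent direction is available regardless of the orientation of $\nabla q(x)$ (a gauge over $\mP$ alone would only control one side). The rest is a direct Fenchel--Young / subgradient computation, and the factor absorbed into the stated constant can be tightened by invoking the conjugate-duality fact that $L$-smoothness of $f$ w.r.t.\ $\kappa_{\widetilde\mP}$ is equivalent to $1/L$-strong convexity of $f^*$ w.r.t.\ $\sigma_{\widetilde\mP}$ and applying it at $z = \nabla f(x)$, $z' = \nabla f(x^*)$.
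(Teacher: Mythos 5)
Your overall architecture is close to the paper's: both arguments combine Fenchel--Young/weak duality (so that $\gap(x,-\nabla f(x))$ dominates the primal suboptimality) with the subgradient inequality for $\phi\circ\kappa_\mP$ at $x^*$ and the optimality condition $-\nabla f(x^*)\in\partial(\phi\circ\kappa_\mP)(x^*)$, and both then invoke a gauge-smoothness inequality. The genuine shortfall is in the last step. Your decomposition keeps only the one-sided Bregman term $D_f(x,x^*)$, and the tilted-function/PL argument gives $D_f(x,x^*)\ge\frac{1}{2L}\,\res(x)^2$, which is tight in general (take $f$ quadratic); chaining therefore yields only $\res(x)\le\sqrt{2L\,\gap(x,-\nabla f(x))}$, not the stated $\sqrt{L\,\gap(x,-\nabla f(x))}$. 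Your proposed repair does not close this: ``$1/L$-strong convexity of $f^*$ w.r.t.\ $\sigma_{\widetilde\mP}$ applied at $z=\nabla f(x)$, $z'=\nabla f(x^*)$'' is exactly the dual restatement of the one-sided inequality you already used, so it reproduces the same $\frac{1}{2L}$ and the $\sqrt2$ survives.

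The paper removes the factor of $2$ by a different bookkeeping of the same ingredients: it lower-bounds the gap by the \emph{full symmetrized} inner product $(x-x^*)^T(\nabla f(x)-\nabla f(x^*))=D_f(x,x^*)+D_f(x^*,x)$, using the subgradient inequality for $f+h$ taken at $x$ (so $\nabla f(x)$ stays in play), the optimality condition at $x^*$, and monotonicity of $\partial h$ (this is Lemma~\ref{lem:gradsubopt}); it then applies the two-sided expansiveness/co-coercivity inequality \eqref{eq:expansiveness:app}, and since $\widetilde\mP$ is symmetric both squared support-function terms equal $\res(x)^2$, giving $\gap(x,-\nabla f(x))\ge\frac{1}{L}\res(x)^2$. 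So to prove the lemma as stated you must capture both one-sided Bregman bounds (equivalently, use co-coercivity rather than a single PL-type step). As written, your bound would still yield a valid safe rule and identification rate with the threshold inflated to $2\sqrt{2L\,\gap}$ and $\delta/(4\sqrt2)$, but it does not establish the lemma's constant.
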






Figure \ref{fig:resproof} gives a cartoon intuition as to what a small residual buys us. In particular, if $\delta$ is larger than $2\res(x)$, then a maximal element of $\{-\nabla f(x^*)^Tp_k\}_k$ must also be a maximal element of $\{-\nabla f(x)^Tp_k\}_k$. Since we can observe a bound on $\res(x)$, it is now possible to exclude which atoms are definitively \emph{not} in $\supp_\mP(x^*)$.
\begin{figure}
\begin{center}
\includegraphics[width=4in]{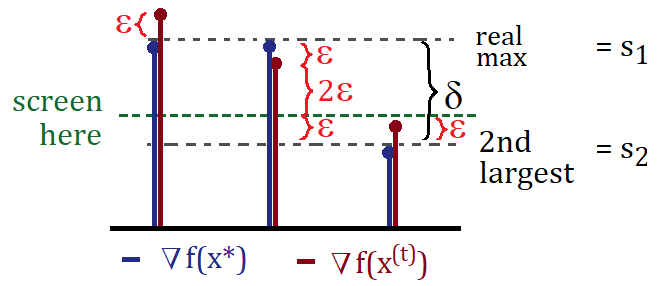}
\end{center}
\caption{\textbf{Support recovery.}
The constant $\delta$ differentiates maximal in-support values from the largest non-support value, as in \eqref{eq:defdelta}. $\epsilon = \res(x^{(t)})$ for some current (non-optimal) iterate $x^{(t)}$. 
Denote $z^* = -\nabla f(x^*)$ and $z^{(t)} = -\nabla f(x^{(t)})$.
Suppose $\sigma_\mP(z^{(t)}) = \sigma_\mP(z^*) + \epsilon$ (illustrated as $s_1+\epsilon$), its largest possible value. Then it is possible that some $p\in \supp_\mP(x^*)$ exists where $p^Tz^{(t)} = \sigma_\mP(z^{(t)}) - 2\epsilon$; thus, a safe screening rule can at largest be a threshold at $\sigma_\mP(z^{(t)})-2\epsilon$. This rule eliminates all  false negatives. To ensure no false positives, the largest possible non-optimal non-support value ($s_2+\epsilon$) must be smaller than the screened point. This can only happen if $\delta > 4\epsilon$.}
\label{fig:resproof}
\end{figure}

\begin{theorem}[Dual screening]
\label{th:screening}
Assume that $f$ is $L$-smooth with respect to $\widetilde \mP$. Then
for any $x$, any $p\in \mP_0$, 
\begin{equation}
\sigma_\mP(-\nabla f(x)) + p^T\nabla f(x) > 2\sqrt{L\gap(x,-\nabla f(x))}
\label{eq:gaprule}
\end{equation}
implies that $p\not\in \supp_\mP(x^*)$,
where $x^*$ is the optimal variable in \eqref{eq:main}.
\end{theorem}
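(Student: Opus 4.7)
The plan is to prove the contrapositive: assume $p \in \supp_\mP(x^*)$ and derive the bound $\sigma_\mP(-\nabla f(x)) + p^T\nabla f(x) \leq 2\sqrt{L\,\gap(x,-\nabla f(x))}$. This matches the cartoon intuition of Figure~\ref{fig:resproof}: if we are within residual $\epsilon$ of the optimum in the dual sense, then any support atom $p$ must have $-\nabla f(x)^T p$ within $2\epsilon$ of $\sigma_\mP(-\nabla f(x))$. The tools are exactly Property~\ref{prop:support_opt} and Lemma~\ref{lem:gapboundsres}; the symmetrized set $\widetilde\mP$ is what lets us convert $\res(x)$ into two-sided bounds.

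First I would set $z = -\nabla f(x)$, $z^* = -\nabla f(x^*)$, and $\epsilon = \res(x) = \sigma_{\widetilde\mP}(z-z^*)$. Since $\widetilde\mP = \mP\cup -\mP$, the symmetry $\sigma_{\widetilde\mP}(z-z^*) = \sigma_{\widetilde\mP}(z^*-z)$ holds, and for every $q\in\mP$ we have both $q^T(z-z^*)\leq \epsilon$ and $q^T(z^*-z)\leq \epsilon$. Applying this to $p\in\mP_0\subset\mP$ gives $|p^Tz - p^Tz^*| \leq \epsilon$. Applying it to the maximizer $s^*\in\mP$ of $\sigma_\mP(z^*)$ yields $\sigma_\mP(z^*) = z^{*T}s^* = z^Ts^* + (z^*-z)^Ts^* \leq \sigma_\mP(z) + \epsilon$, and symmetrically $\sigma_\mP(z) \leq \sigma_\mP(z^*) + \epsilon$, so $|\sigma_\mP(z) - \sigma_\mP(z^*)| \leq \epsilon$.

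Next I would invoke Property~\ref{prop:support_opt}: since $p\in\supp_\mP(x^*)$, we have $p^Tz^* = \sigma_\mP(z^*)$. Chaining the two inequalities above,
\[
p^Tz \;\geq\; p^Tz^* - \epsilon \;=\; \sigma_\mP(z^*) - \epsilon \;\geq\; \sigma_\mP(z) - 2\epsilon,
\]
which rearranges to $\sigma_\mP(-\nabla f(x)) + p^T\nabla f(x) \leq 2\epsilon$. Finally, Lemma~\ref{lem:gapboundsres} gives $\epsilon \leq \sqrt{L\,\gap(x,-\nabla f(x))}$, producing the desired bound and completing the contrapositive.

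There is no real obstacle; the whole content of the theorem is the two-sided triangle-inequality step, which is precisely why the residual is defined with $\sigma_{\widetilde\mP}$ rather than $\sigma_\mP$. The factor of $2$ in~\eqref{eq:gaprule} is exactly the ``$2\epsilon$'' from combining the two bounds (one for $p^Tz$ vs.\ $p^Tz^*$, one for $\sigma_\mP(z)$ vs.\ $\sigma_\mP(z^*)$), matching the threshold $\sigma_\mP(z^{(t)}) - 2\epsilon$ annotated in Figure~\ref{fig:resproof}. The only care needed is to ensure the symmetrization is used consistently so that both directions of the triangle inequality apply, and to note that the optimality condition of Property~\ref{prop:support_opt} is used at $x^*$ (with $z^*$) and not at $x$.
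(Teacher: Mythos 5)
Your proposal is correct and takes essentially the same route as the paper: the two $\epsilon$-bounds you chain (on $p^Tz$ versus $p^Tz^*$ and on $\sigma_\mP(z)$ versus $\sigma_\mP(z^*)$, both via the symmetrized $\sigma_{\widetilde\mP}$) are precisely the paper's triangle-inequality step, combined with Lemma~\ref{lem:gapboundsres} and Property~\ref{prop:support_opt}. The only difference is presentational: you argue the contrapositive, while the paper argues directly that the screening condition forces $\sigma_\mP(-\nabla f(x^*)) + p^T\nabla f(x^*) > 0$.
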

A formal proof is in the appendix, following the logic in Figure \ref{fig:resproof}.

This gives rise to a dynamic screening rule: fix $\mS^{(0)} = \mP_0$ and
\[
\mS^{(t)} = \mS^{(t-1)} \setminus \{p : p \mathrm{\; satisfies  \;  \eqref{eq:gaprule} \; for\;  } x=x^{(t)}\}.
\]
\begin{remark}[Practical considerations]
Some things to note about this screening method:
\begin{itemize}
    \item Computing $L$ may be challenging, depending on $\kappa_\mP$; as shown previously, at the very least it may require a full pass over the data. However, this is a one-time calculation per dataset, and can be estimated if data are assumed to be drawn from specific distributions (as in sensing applications). 
    \item If $\mP_0$ is large (such as in submodular optimization) then checking condition \eqref{eq:gaprule} for each atom at each iteration is also cumbersome. However, if the screening is aggressive, then after a few iterations, the list of potential atoms to check will decrease quickly as well. 
    \item Computing the gap, in comparison, is almost automatic in gCGM, given that $z^{(t)} = -\nabla f(x^{(t)}$ is the (always feasible) dual candidate and $s^{(t)}$ already computed. In comparison, when dealing with a different dual candidate, then the term $f(x) + f^*(-z)$ is not easily upper bounded, and depending on the choice of $f$ may be difficult to compute in practice.
\end{itemize}
\end{remark}

The ``safeness" 
of the screening rule (Theorem \ref{th:screening}) ensures that $\mS^{(t)}\supseteq \supp_\mP(x^*)$, for all $t$. 
For support identification, we would like to find a $\bar t$ where for all $t > \bar t$, $\mS^{(t)} = \supp_\mP(x^*)$. Note that with a deterministically decaying sequence for $\theta^{(t)}$, finite-time support recovery \emph{without} screening is impossible, since any erroneously selected atoms early on can never fully diminish.
Even with screening, it is still not automatically guaranteed that such a finite $\bar t$ exists, since the problem itself may be degenerate \cite{lewis2011identifying,hare2011identifying,burke1988identification}. This occurs when $\delta = 0$, where
\begin{equation}
\delta:=\min_{p\not\in\supp_\mP(x^*)} \sigma_\mP(-\nabla f(x^*)) +(\nabla f(x^*))^Tx^*
\label{eq:defdelta}
\end{equation}
is a problem-dependent (algorithm-independent) quantity.

\begin{theorem}[Support identification of screened gCGM]
\label{th:supportid}
Assume $f$ is $L$-smooth with respect to $\widetilde\mP$. Then ${\mS^{(t)} = \supp_\mP(x^*)}$) when
\begin{equation}
 \sqrt{L\min_{i\leq t}\gap(x^{(i)},\nabla f(x^{(i)}))} <  \delta/4,
\label{eq:suppID:bound}
\end{equation}
which, under the assumptions of Theorem \ref{th:convergence}, happens at a rate $t = O(1/(\delta^2))$.
\end{theorem}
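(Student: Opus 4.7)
The plan is to show that whenever the running minimum gap is small enough, every atom outside $\supp_\mP(x^*)$ has been permanently removed from $\mS^{(t)}$, so combined with the safeness guarantee $\mS^{(t)} \supseteq \supp_\mP(x^*)$ from Theorem~\ref{th:screening}, the two sets coincide. The bridge between ``small gap'' and ``screened out'' is to show that the screening quantity $\sigma_\mP(-\nabla f(x^{(i)})) + p^T\nabla f(x^{(i)})$ evaluated at an iterate is close to its value at the optimum, which by definition of $\delta$ is at least $\delta$ for every non-support atom.

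First I would fix any $p \in \mP_0\setminus \supp_\mP(x^*)$ and control the two perturbation terms arising from replacing $\nabla f(x^*)$ by $\nabla f(x^{(i)})$. Since $p \in \mP$ and $-p \in -\mP \subseteq \widetilde\mP$, the symmetrization built into the support function gives
\[
|p^T(\nabla f(x^{(i)}) - \nabla f(x^*))| \leq \sigma_{\widetilde\mP}(\nabla f(x^{(i)})-\nabla f(x^*)) = \res(x^{(i)}),
\]
and the analogous inequality $|\sigma_\mP(-\nabla f(x^{(i)})) - \sigma_\mP(-\nabla f(x^*))| \leq \res(x^{(i)})$ follows from the triangle-like bound $\sigma_\mP(z_1) - \sigma_\mP(z_2) \leq \sigma_{\widetilde\mP}(z_1-z_2)$. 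Together with the definition \eqref{eq:defdelta} of $\delta$, chaining these two bounds yields
\[
\sigma_\mP(-\nabla f(x^{(i)})) + p^T\nabla f(x^{(i)}) \;\geq\; \delta - 2\,\res(x^{(i)}).
\]

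Next I would invoke Lemma~\ref{lem:gapboundsres} to replace $\res(x^{(i)})$ by $\sqrt{L\,\gap(x^{(i)},-\nabla f(x^{(i)}))}$. Suppose \eqref{eq:suppID:bound} holds, and pick an index $i^\star \leq t$ attaining $\sqrt{L\,\gap(x^{(i^\star)},-\nabla f(x^{(i^\star)}))} < \delta/4$; substituting in the previous display gives
\[
\sigma_\mP(-\nabla f(x^{(i^\star)})) + p^T\nabla f(x^{(i^\star)}) \;>\; \delta - \tfrac{\delta}{2} \;=\; \tfrac{\delta}{2} \;>\; 2\sqrt{L\,\gap(x^{(i^\star)},-\nabla f(x^{(i^\star)}))},
\]
which is exactly the screening test \eqref{eq:gaprule}. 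Hence $p$ is removed from $\mS$ at step $i^\star$, and because the sets $\mS^{(t)}$ are monotonically shrinking it stays removed. Since $p$ was arbitrary, $\mS^{(t)} \cap (\mP_0\setminus\supp_\mP(x^*)) = \emptyset$, and safeness supplies the reverse inclusion. The rate is then immediate from Theorem~\ref{th:convergence}: $\min_{i\leq t}\gap(x^{(i)},-\nabla f(x^{(i)})) = O(1/t)$, so \eqref{eq:suppID:bound} holds as soon as $t$ exceeds a constant multiple of $L/\delta^2$, i.e. $t = O(1/\delta^2)$. The only subtle point I expect is getting both perturbation terms controlled by the \emph{same} quantity $\res(x^{(i)})$ with the correct constants; this is precisely what the symmetrization $\widetilde\mP = \mP \cup -\mP$ in the definition of $\res$ is designed to enable, and without it the factor of $4$ in \eqref{eq:suppID:bound} could degrade.
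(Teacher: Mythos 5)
Your argument is correct and is essentially the paper's own route, formalizing the sketch the paper gives via Figure~\ref{fig:resproof}: you bound both perturbation terms of the screening quantity by $\res(x^{(i)})$ using the symmetrization $\widetilde\mP$, invoke Lemma~\ref{lem:gapboundsres} to pass to the gap, trigger the rule of Theorem~\ref{th:screening} when $\sqrt{L\,\gap}<\delta/4$, and combine monotonicity of $\mS^{(t)}$ with safeness and the $O(1/t)$ gap rate of Theorem~\ref{th:convergence} for the $O(1/\delta^2)$ complexity. The only discrepancy is notational: you read the definition \eqref{eq:defdelta} with $(\nabla f(x^*))^Tp$ in place of $(\nabla f(x^*))^Tx^*$, which is the intended meaning (the paper's display appears to contain a typo), so your usage is consistent with the paper's argument.
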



The proof follows from the gap bound (Lemma \ref{lem:gapboundsres}), gap rate (Theorem \ref{th:convergence}), and scrutiny of Figure \ref{fig:resproof}; specifically, when $\epsilon < \delta/4$, then any rule that screens away elements that are more than $2\epsilon$ from $\sigma_\mP(x^{(t)})$ will screen away \emph{all} the non-support elements.

\begin{remark}[Generality]
Note that Theorems \ref{th:screening} and \ref{th:supportid} impose no conditions on the sequence $\theta^{(k)}$, or choice of $\phi$, $f$, etc.,  except $L$-smoothness of $f$. In other words, for any method where $\epsilon(t)  \geq \min_{i \leq  t}\gap(x^{(i)},\nabla f(x^{(i)}))$ is known, then a corresponding screening rule and support identification rate automatically follow.
\end{remark}

\section{Experiments}
\label{sec:experiments}
We consider  sparse logistic regression
\begin{equation}
\min_{x\in \R^d} \;\; -\frac{1}{n}\sum_{i=1}^n \log(1+e^{b_ia_i^Tx} ) + \underbrace{\lambda\;\phi(C^{-1}\|x\|_1)}_{h(x)},
\label{eq:mnist-logreg}
\end{equation}
where $\lambda > 0$ controls the weighting of the penalty term and $C > 0$  the magnification of $\mP$. Here, $a_i\in \R^d$ are data vectors and $b_i \in \{-1,1\}$ are binary labels.
In all cases we run gCGM with $\theta^{(t)} = 2/(t+1)$.

\subsection{Synthetic experiments}
First, we generate  $a_i\in \R^{50}$ i.i.d.~standard Gaussian normal vectors, and fix $b_i = 1$, for $i = 1,...,100$, and analyze the numerical behavior of gCGM when $\phi(\xi) = p^{-1} \xi^p$; we fix $C = 1$ here.
The duality gap for different choices of $p$ are plotted in Figure \ref{fig:gap}, with $\lambda = 0.01$. For low values of $p$ we observe some numerical instability in the early iterates, as for ``flatter" penalty functions the new steps $s^{(t)}$ can be very large. Figure \ref{fig:res} compares different problem residuals with $p = 2$, $\lambda = 1.0$. 
In particular we are able to verify our $O(1/t)$ bound on all residuals, though it is clear that for this example, the gap is converging much more slowly than the gradient error $\sigma(z^{(t)}-z^*)^2$, which is almost twice as fast, which is why our screening rule, though safe, can be pessimistic in practice. Finally, Figure~\ref{fig:screening} shows the evolution of the support size for $p = 2$ and varying values of $\lambda$. In general, a larger value of $\lambda$ causes aggressive screening early on, while for larger values of $\lambda$ screening may be much slower, despite arriving at about the same final sparsity level.

\begin{figure}
\begin{center}
\includegraphics[width=4in]{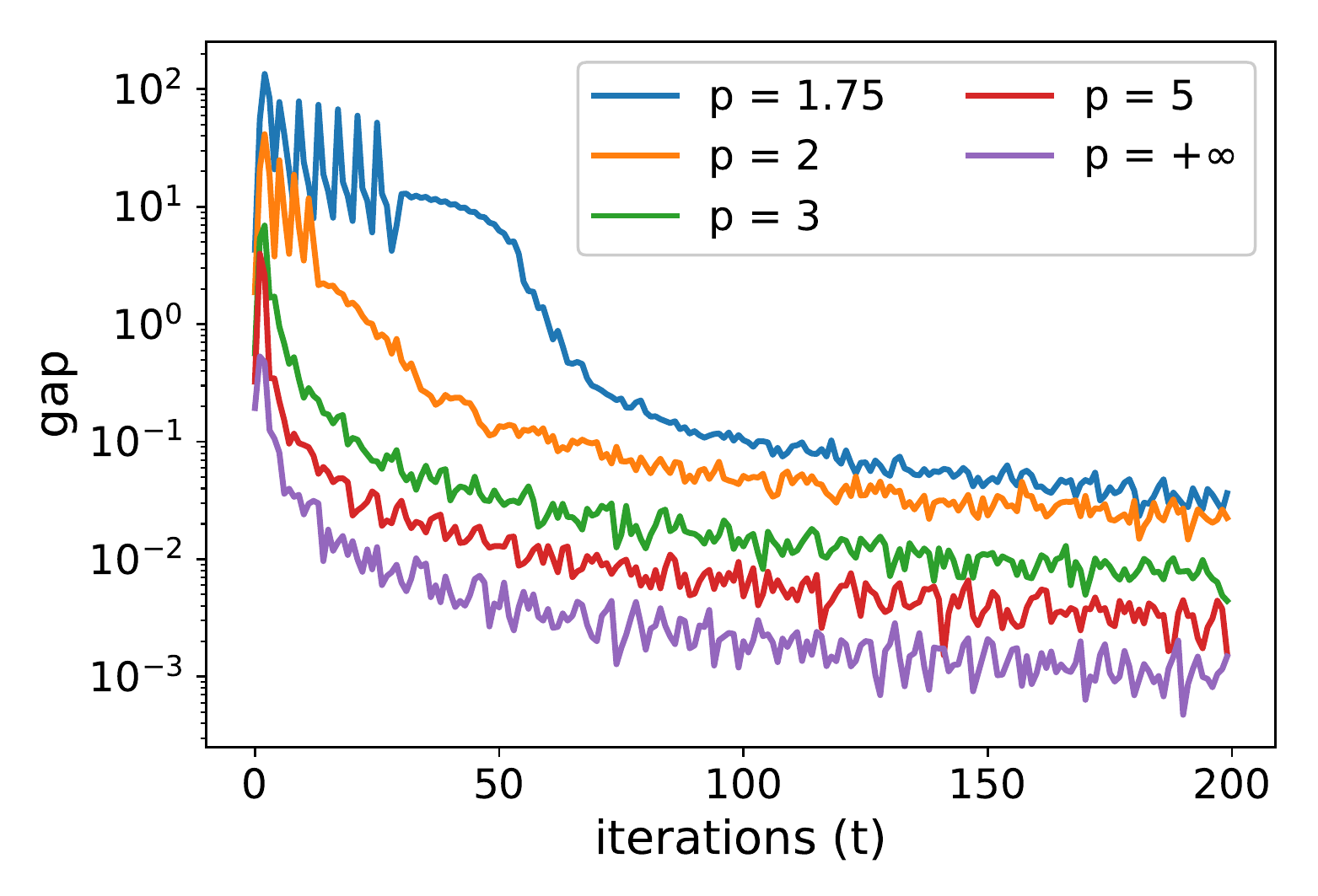}
\end{center}
\caption{\textbf{Duality gap for varying $p$.}  $\lambda = 0.01$. For ${p < 1.5}$, the method diverged.     $p = +\infty$  corresponds to vCGM. 
}
\label{fig:gap}
\end{figure}

\begin{figure}
\begin{center}
\includegraphics[width=4in]{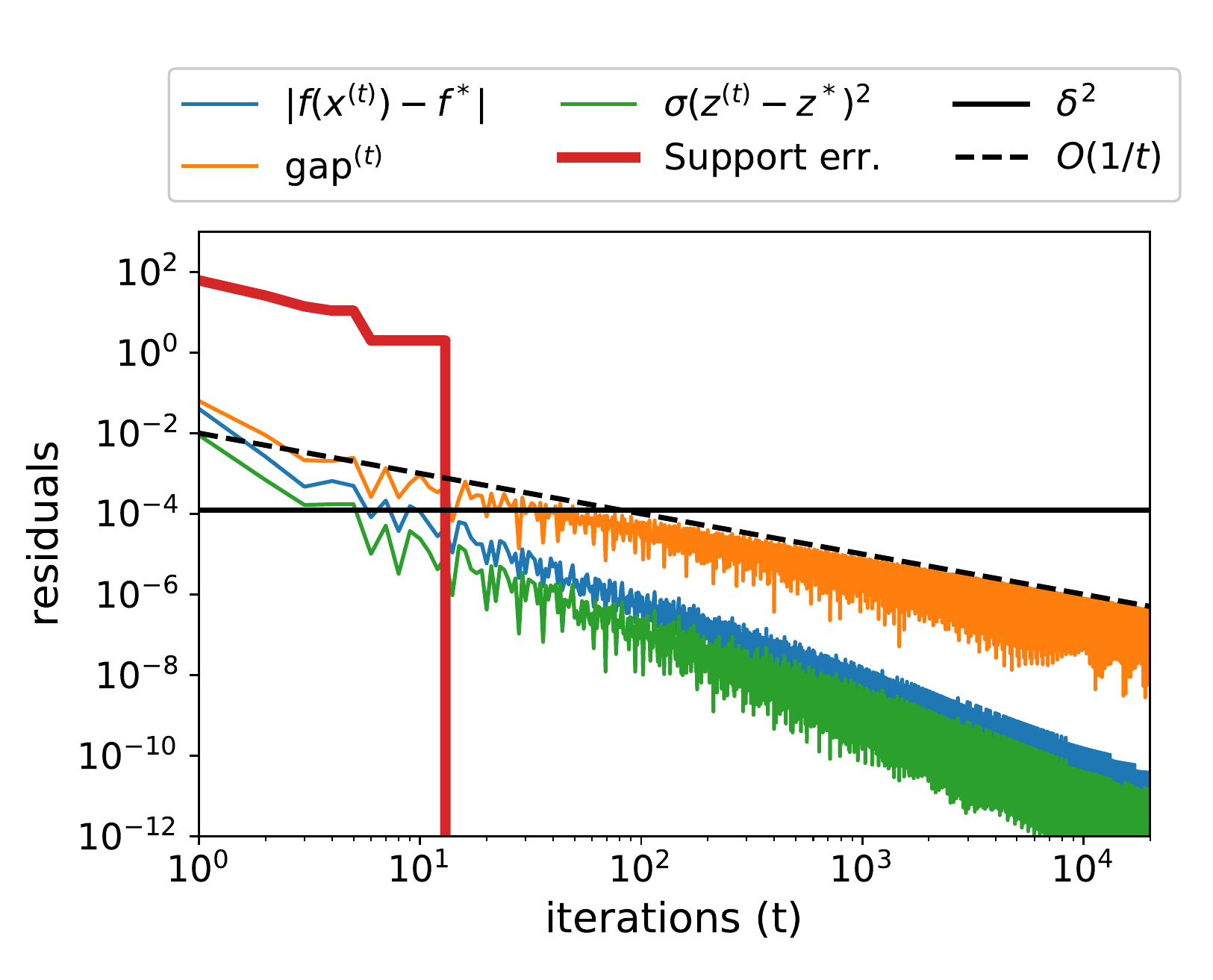}
\end{center}
\caption{\textbf{Residuals.}  $p = 2$, $\lambda = 1.0$. The objective error and gap decay at the computed rate of $O(1/t)$. The gradient error decays as $O(1/\sqrt{t})$. In fact when the gradient error $\sigma(z^{(t)}-z^*)$ dips below $\delta/4$, support error is 0; unfortunately, the gap takes longer to reach this point.}
\label{fig:res}
\end{figure}

\begin{figure}
\begin{center}
\includegraphics[width=4in]{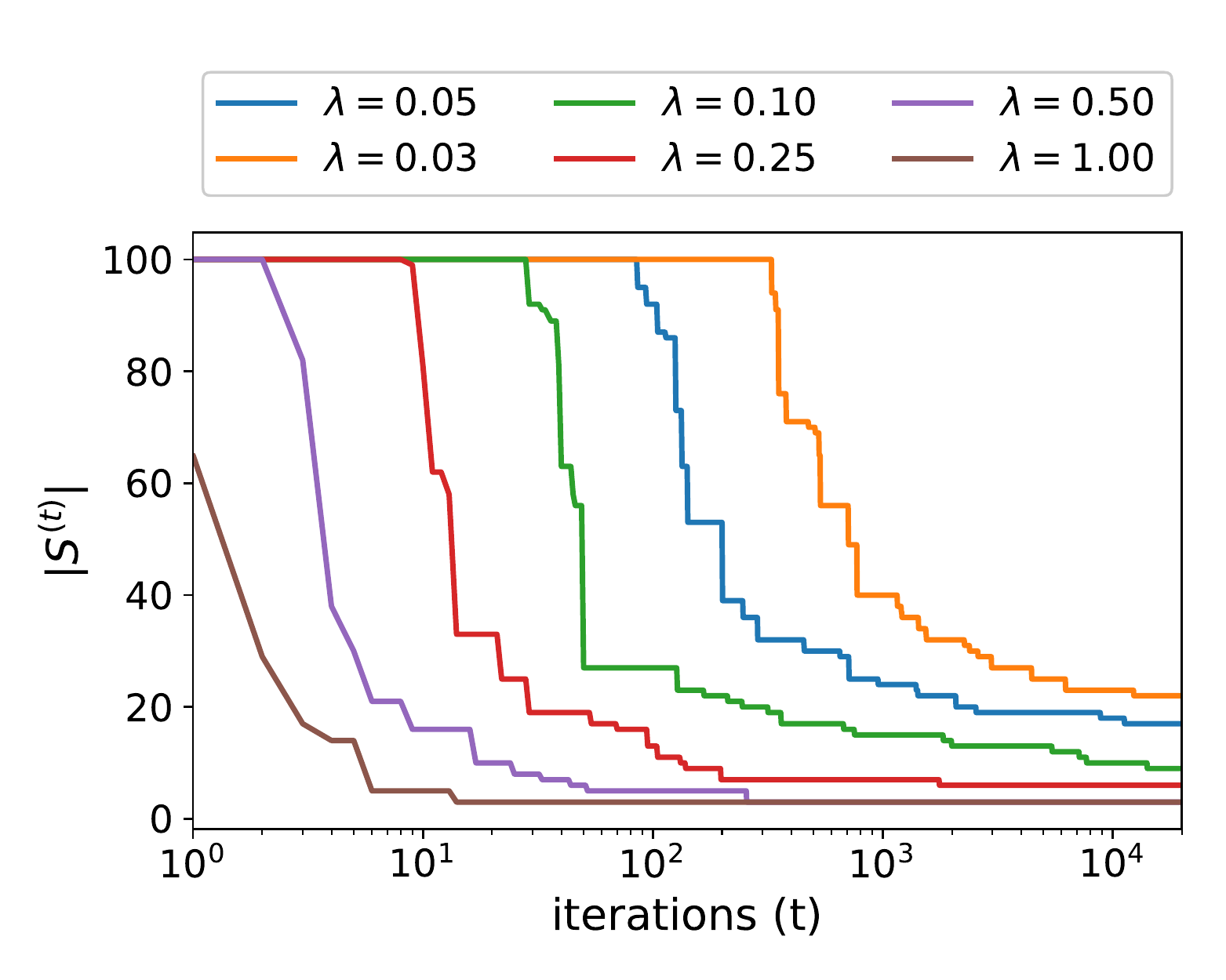}
\end{center}
\caption{\textbf{Screening.}  $p = 2$, and we plot $|\mS^{(t)}|$ the number of  unscreened variables at each iteration. We observe more aggressive screening for larger $\lambda$. 
}
\label{fig:screening}
\end{figure}

\subsection{MNIST classification of 4's vs 9's}
Figure \ref{fig:mnist} shows the screening behavior of \eqref{eq:mnist-logreg} on the binary classification problem of disambiguating 4's and 9's in the MNIST handwriting dataset. We experiment with three  schemes: one-norm squared regularization ($h(x) = \frac{\lambda}{2} \|x\|_1^2$), one-norm ball constraint ($h(x) = \iota_{C\mP}(x)$), and log barrier \eqref{eq:logbarrier}. 
All experiments are halted at 10,000 iterations for fair comparison.

There are two major observations.
First, the yellow curve (observed sparsity) is often much lower than the red curve (guarantee-able sparsity). This is because when $\lambda$ is small or $C$ is big, the gap converges slowly, and the condition $\gap(x^{(t)}) < \delta/4$ requires $t > 10,000$ (our stopping condition).
However, that is the tradeoff required for ``safety". 

Second, the red curve (guarantee-able sparsity) is only small when the blue curve (misclassification rate) is higher, suggesting an inherent performance/sparsity tradeoff. This tradeoff is in fact observed for all three choices of $\phi$ 
and suggests that in general, the MNIST classification task performs best without extreme sparsity.


\begin{figure}
\begin{center}
\includegraphics[width=4in]{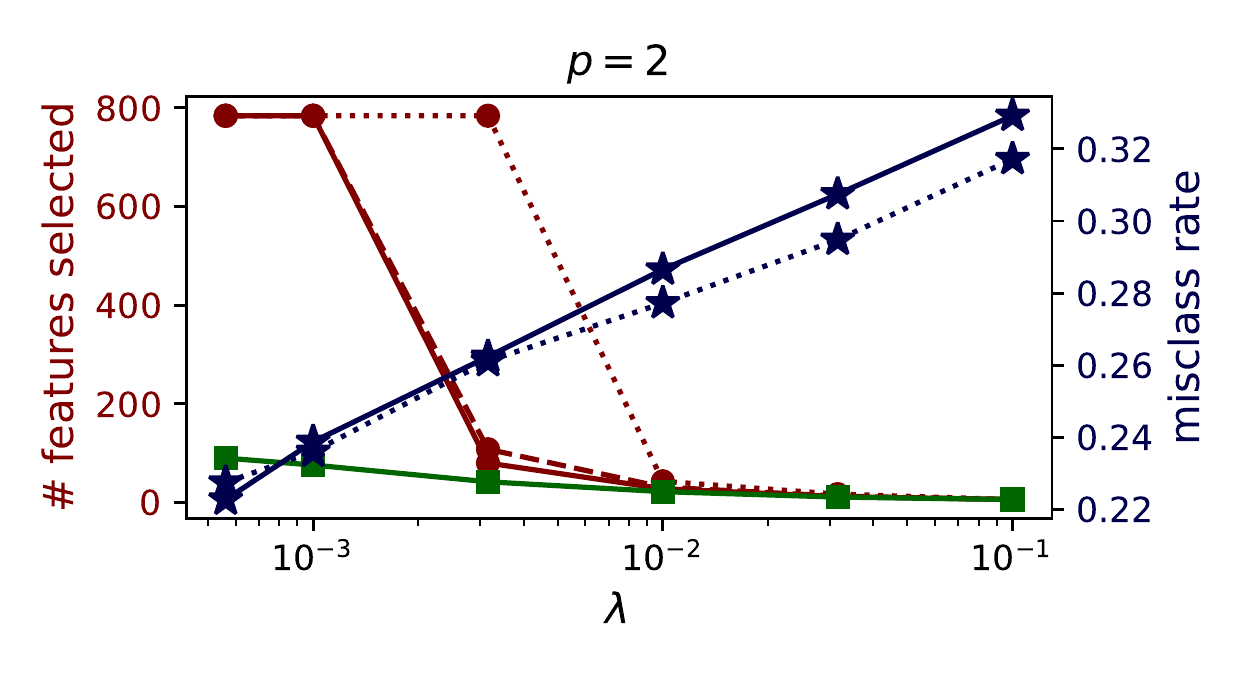}
\includegraphics[width=4in]{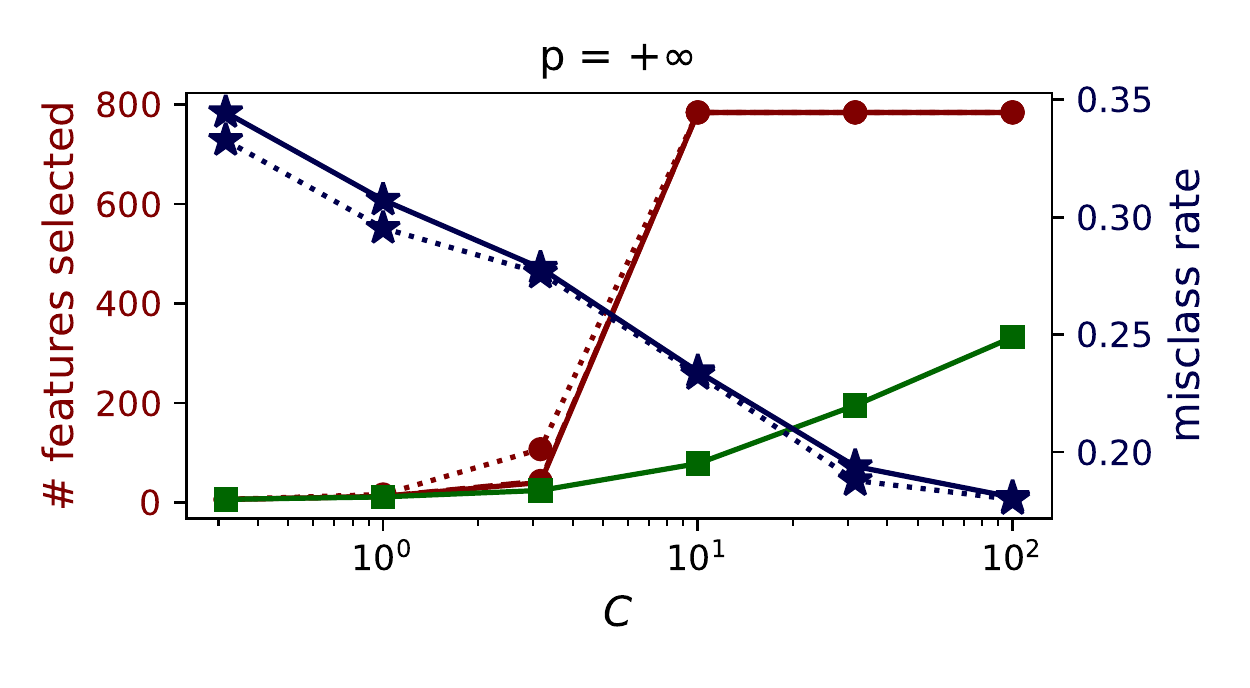}
\includegraphics[width=4in]{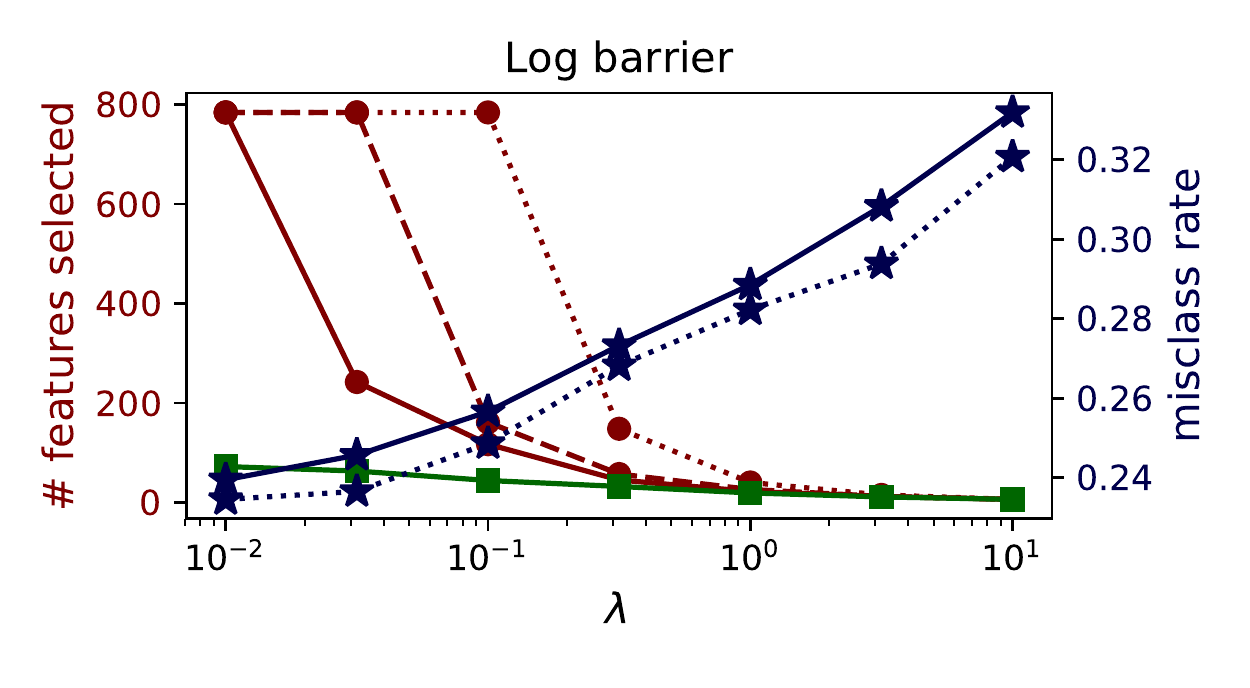}
\end{center}
\caption{\textbf{MNIST experiment.} Solid/dashed blue lines are train/test misclassification rates. Solid/dashed/dotted red lines are number of unscreened features at 10000 / 5000 / 1000 iterations; it is possible that   more features would be screened away after more iterations, as the gap converges very slowly for small $\lambda$.  Green square line plots the number of nonzeros of $x^{(10000)}$,  which  is observed to be stable. (\textbf{Top}) $h(x) = \frac{\lambda}{2}\|x\|_1^2$.  (\textbf{Middle}) $h(x) = \iota_{C\mP}(x)$. (\textbf{Bottom}) Log barrier function \eqref{eq:logbarrier} 
 where $C = 10$. 
 }
\label{fig:mnist}
\end{figure}

\section{Conclusion}
We have given a gap-based safe screening rule for a family of sparse optimization problems, for various types of sparse penalties and atoms. We analyze this in the context of the gCGM, and give rates for convergence and support identification for nondegenerate problems. In particular, the generalization over atom type and choice of $\phi$ allows for a much richer collection of sparse models,  interpolating between the piece-wise linear unconstrained LASSO penalty and the hard norm ball constraint. These penalties differ in their sensitivity toward hyperparameters, and may be more suited to a wider range of applications. 

A key promise in these rules is that, in the spirit of \citet{ghaoui2010safe}, screening is \emph{safe}, e.g., no true nonzero will be wrongly called a zero.  However, in practice this rule may be pessimistic, first because the gap may serve as an overly pessimistic upper bound of the gradient error, and second because sparsity in the true solution of the optimization problem may be overkill for sparsity of a solution that generalizes well for the machine learning task.  

Still,  there are practical advantages.
A sparsity guarantee  gives  storage benefits; a  model trained on a large server can be moved to a mobile device, for example, with no need for heuristic thresholding or rounding. 
And, if $\mP_0$ is very large, then screening can greatly improve the runtime of the linear minimization oracle (LMO), used in each step; since the rules are safe, this can be done without disrupting any convergence guarantees.

\begin{appendices}

\section{Helpful facts}
\label{appx:helpful}

\begin{lemma}[Relationship of $\kappa_\mP$ to $\|\cdot\|_2$]
Denote 
\[
\diam(\mP) := \sup_{x\in \mP,y\in \mP}\|x-y\|_2.
\] 
Then for any closed convex $\mP$,
\[
\diam(\mP)  \kappa_\mP(x) \geq \|x\|_2.
\]
\end{lemma}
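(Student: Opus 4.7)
The approach is direct: unpack the definition of $\kappa_\mP$, exhibit a scaled representation of $x$ that lies in $\mP$, and control its Euclidean norm by the diameter. First, I would dispose of the trivial cases. If $x = 0$, both sides vanish; if $\kappa_\mP(x) = +\infty$, the inequality is vacuous. So assume $\tau := \kappa_\mP(x) \in (0,\infty)$, and pick coefficients $c_i \geq 0$ realizing the minimum in \eqref{eq:gauge}, i.e.\ with $\sum_i c_i = \tau$ and $\sum_i c_i p_i = x$.

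Setting $\lambda_i := c_i/\tau$ yields $\sum_i \lambda_i = 1$ and $x/\tau = \sum_i \lambda_i p_i$, so $x/\tau \in \conv(\mP_0) = \mP$. Under the implicit hypothesis that $0 \in \mP$ (consistent with the paper's running setup---symmetric atomic sets, norms, and the conic case where ``$0$ must be on the boundary of $\mP$''), we then get
\[
\|x/\tau\|_2 \;=\; \|x/\tau - 0\|_2 \;\leq\; \sup_{u,v\in\mP}\|u-v\|_2 \;=\; \diam(\mP).
\]
Multiplying through by $\tau = \kappa_\mP(x)$ delivers the claim.

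If one prefers to avoid pulling $0 \in \mP$ out of the paper's context, the same calculation can be packaged triangle-inequality style: for any $y \in \mP$,
\[
\|x - \tau y\|_2 \;=\; \Bigl\|\sum_i c_i(p_i - y)\Bigr\|_2 \;\leq\; \sum_i c_i \|p_i - y\|_2 \;\leq\; \tau \diam(\mP),
\]
so that $\|x\|_2 \leq \tau(\|y\|_2 + \diam(\mP))$; choosing $y$ as the point of $\mP$ closest to the origin (equal to $0$ whenever $0 \in \mP$) recovers the lemma.

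The main (and only) subtlety is that the inequality genuinely depends on $0 \in \mP$: for example $\mP = [1,2] \subset \R$ with $x = 1.5$ gives $\kappa_\mP(x)\diam(\mP) = 0.75 < 1.5 = \|x\|_2$. I would therefore flag this hypothesis explicitly in the write-up, since it is used elsewhere in the paper but not stated in the lemma. Beyond this, the argument is purely mechanical---no obstacle worth extracting.
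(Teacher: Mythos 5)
Your proof is correct (modulo the caveat you yourself raise) but it runs along a different track than the paper's. The paper argues via gauge monotonicity under set inclusion: writing $\kappa_\mP(x)=\inf\{\mu\geq 0: x\in\mu\mP\}$ and enclosing $\mP$ in an origin-centered Euclidean ball $\mB_r$ of radius $r$, it gets $\kappa_\mP(x)\geq \kappa_{\mB_r}(x)=r^{-1}\|x\|_2$ and then asserts $r\leq\diam(\mP)$. You instead normalize the atomic decomposition: with $\tau=\kappa_\mP(x)$ and optimal coefficients $c_i$, the point $x/\tau=\sum_i (c_i/\tau)p_i$ is a convex combination of atoms, hence lies in $\mP$, and its distance to $0$ is bounded by $\diam(\mP)$. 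Your route is more self-contained (it works straight from the paper's definition \eqref{eq:gauge} and does not need the equivalence with the Minkowski-functional form), while the paper's is a one-line consequence of gauge monotonicity. Importantly, the hidden hypothesis you flag --- $0\in\mP$ --- is present in \emph{both} arguments: the paper's step ``$r\leq\diam(\mP)$'' for an origin-centered enclosing ball fails for your counterexample $\mP=[1,2]$ exactly as your normalized-point argument does, and indeed the lemma as stated is false there (the paper elsewhere invokes ``without loss of generality $0\in\mP$'', which is what rescues it, since the atomic gauge \eqref{eq:gauge} coincides with the Minkowski functional of $\conv(\mP_0\cup\{0\})$). So your explicit flagging of this hypothesis, and your triangle-inequality variant quantifying the dependence through the point of $\mP$ nearest the origin, is a genuine improvement in precision over the paper's proof; otherwise the two arguments buy essentially the same thing.
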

\begin{proof}
Using another  classical definition for gauge functions,
\begin{eqnarray*}
\kappa_\mP(x) &=& \inf\{\mu \geq 0 : x\in \mu \mP\}\\
&\overset{\mP \subset \mathcal B_r}{\geq}& \inf \{\mu : x \in \mu \mathcal B_r\} = \inf \{\mu : x \in \mu r \mathcal B_1\} =  r^{-1}\|x\|_2,
\end{eqnarray*}
where $\mB_r$ is the smallest Euclidean ball of radius $r$ that includes $\mP$; that is, $r \leq \diam(\mP)$.
\end{proof}

We denote the subdifferential of a convex function $f$ at $x$ as $\partial f(x)$, and the normal cone of $\mP$ at $x$ as $\mN_\mP(x)$; See \citet{rockafellar1970convex}.

\begin{lemma}[Conjugate of nested function]
If $g(x) = \phi(\kappa_\mP(x))$ and $\phi$ is monotonically nondecreasing, then $g^*(z) = \phi^*(\sigma_\mP(z))$.
\end{lemma}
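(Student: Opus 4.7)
The plan is to prove $g^*(z) = \phi^*(\sigma_\mP(z))$ by establishing matching upper and lower bounds, using a variable-splitting argument that decomposes each $x \in \R^d$ into a magnitude $\xi = \kappa_\mP(x) \geq 0$ and a direction in $\mP$. This reduces the Fenchel conjugate over $\R^d$ to the one-dimensional conjugate of $\phi$ evaluated at the support function $\sigma_\mP(z)$.

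For the upper bound, I would fix $x$ with $\kappa_\mP(x) < +\infty$ (the only relevant case), set $\xi = \kappa_\mP(x)$, and note that $x \in \xi \mP$ since $\mP$ is closed and convex. By positive homogeneity of the support function, $z^T x \leq \xi \sigma_\mP(z)$, so
\[
z^T x - \phi(\kappa_\mP(x)) \leq \xi \sigma_\mP(z) - \phi(\xi) \leq \sup_{\eta \geq 0}\; \eta \sigma_\mP(z) - \phi(\eta) = \phi^*(\sigma_\mP(z)),
\]
and taking the sup over $x$ yields $g^*(z) \leq \phi^*(\sigma_\mP(z))$. For the reverse inequality, I would use compactness of $\mP$ to pick $y^* \in \mP$ attaining $z^T y^* = \sigma_\mP(z)$. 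For each $\xi \geq 0$, the candidate $x = \xi y^*$ satisfies $\kappa_\mP(x) \leq \xi$ by positive homogeneity, and monotonicity of $\phi$ then gives $\phi(\kappa_\mP(x)) \leq \phi(\xi)$. Hence
\[
g^*(z) \geq z^T (\xi y^*) - \phi(\kappa_\mP(\xi y^*)) \geq \xi \sigma_\mP(z) - \phi(\xi),
\]
and taking the sup over $\xi \geq 0$ gives $g^*(z) \geq \phi^*(\sigma_\mP(z))$.

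The monotonicity hypothesis is essential only in the lower-bound direction: it allows us to replace $\phi(\kappa_\mP(\xi y^*))$ by the larger quantity $\phi(\xi)$, which is what makes the one-dimensional conjugate line up. Beyond this, the argument is routine, reducing to positive homogeneity of $\kappa_\mP$ and $\sigma_\mP$; no subtle compactness or attainment issues arise since $\mP = \conv(\mP_0)$ is compact by assumption, and the case $\sigma_\mP(z) \leq 0$ is handled correctly since both sides then collapse to $-\phi(0)$ at $\xi = 0$.
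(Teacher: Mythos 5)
Your proof is correct and takes essentially the same route as the paper's: decompose the conjugate's variable into a scale $\xi \geq 0$ times a direction in $\mP$, and reduce $g^*$ to the one-dimensional conjugate of $\phi$ evaluated at $\sigma_\mP(z)$. The only difference is presentational: the paper compresses this into a single chain of sup-interchanges, whereas your two-sided argument makes explicit the step where monotonicity of $\phi$ is actually used (the parametrization $s=\xi\hat s$ with $\hat s\in\mP$ only guarantees $\kappa_\mP(s)\leq\xi$).
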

\begin{proof}
From the definitions, we have:
\begin{eqnarray*}
g^*(z) &=& \sup_{s} s^Tz - \phi(\kappa_\mP(s))\\
&=& \sup_{\xi, \hat s\in \mP} \xi \cdot \bar s^Tz - \phi(\xi)=  \phi^*\left( \sup_{ \hat s\in \mP}\bar s^Tz \right)= \phi^*(\sigma_\mP(z)).
\end{eqnarray*}
\end{proof}

\begin{lemma}[Chain rule for subdifferential (\citet{bauschke2011convex}, Corollary 16.72.)]
\label{lem:chainrulesubdiff}
Let $f:H\to \R$ be continuous and convex, and let $\phi:\R\to\R$ be increasing on $\range(f)$. Suppose that $(\ri(\range~f)+\R_{++}) \cap \ri\; \dom \;\phi \neq \emptyset$. Let $x\in H$ such that $\kappa(x)\in \dom(\phi)$. Then 
\[
\partial (\phi \circ f)(x) = \{\alpha u : \alpha \in \partial \phi(f(x)), \; u \in \partial f(x)\}.
\]
\end{lemma}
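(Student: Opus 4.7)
The plan is to establish the two set inclusions separately: the direction ``$\supseteq$'' by an elementary chaining of subgradient inequalities, and the direction ``$\subseteq$'' by a Fenchel conjugate computation combined with Young's equality at an attained optimum.

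For $\supseteq$, I would fix $\alpha \in \partial\phi(f(x))$ and $u \in \partial f(x)$. The monotonicity of $\phi$ on $\range(f)$ forces $\alpha \geq 0$, since a negative subgradient at $f(x)$ would locally decrease $\phi$. The two subgradient inequalities
\[
\phi(s) \geq \phi(f(x)) + \alpha(s-f(x)) \quad \forall s\in \dom\phi, \qquad f(y) \geq f(x) + \langle u, y-x\rangle \quad \forall y \in H,
\]
then compose: substitute $s = f(y)$ into the first and use $\alpha \geq 0$ to multiply the second, yielding $\phi(f(y)) \geq \phi(f(x)) + \langle \alpha u, y-x\rangle$, i.e., $\alpha u \in \partial(\phi\circ f)(x)$. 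No qualification is required for this direction.

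For $\subseteq$, the key tool is the conjugate identity, which is valid because $\dom \phi^* \subseteq [0,+\infty)$ whenever $\phi$ is nondecreasing:
\[
(\phi\circ f)^*(v) = \inf_{\alpha \geq 0} \bigl\{\phi^*(\alpha) + (\alpha f)^*(v)\bigr\},
\]
with $(\alpha f)^*(v) = \alpha f^*(v/\alpha)$ if $\alpha > 0$ and $\iota_{\{0\}}(v)$ if $\alpha = 0$. I would derive this by substituting $\phi(t) = \sup_{\alpha\geq 0}(\alpha t - \phi^*(\alpha))$ into the definition of $(\phi\circ f)^*$ and swapping the inner $\sup_x$ with the outer $\inf_\alpha$. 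Given $v\in \partial(\phi\circ f)(x)$, Young's equality $(\phi\circ f)^*(v) + \phi(f(x)) = \langle v,x\rangle$, combined with attainment of the infimum at some $\alpha^\star \geq 0$, decomposes into two Young equalities: $\phi^*(\alpha^\star) + \phi(f(x)) = \alpha^\star f(x)$ and $(\alpha^\star f)^*(v) + \alpha^\star f(x) = \langle v, x\rangle$. The first says $\alpha^\star \in \partial\phi(f(x))$; the second, when $\alpha^\star > 0$, says $v/\alpha^\star \in \partial f(x)$. Setting $u = v/\alpha^\star$ produces the required decomposition $v = \alpha^\star u$.

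The main obstacle is twofold. First, the $\sup$/$\inf$ swap used to derive the conjugate identity is not free, and the hypothesis $(\ri\range f + \R_{++})\cap \ri\dom\phi \neq \emptyset$ is exactly the constraint qualification that provides the strong-duality condition guaranteeing the infimum is both exact and attained; without it, one only obtains $(\phi \circ f)^* \leq \inf_\alpha(\cdots)$ and the decomposition fails. Second, the edge case $\alpha^\star = 0$ must be handled separately: here $(\alpha^\star f)^*(v) = \iota_{\{0\}}(v)$ forces $v = 0$, and the decomposition is recovered trivially as $v = 0\cdot u$ for any $u \in \partial f(x)$, which is nonempty because $f$ is finite and continuous (hence subdifferentiable) on its domain. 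Everything else in the argument is routine Fenchel bookkeeping.
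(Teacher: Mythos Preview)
The paper does not supply its own proof of this lemma: it is quoted directly as Corollary~16.72 from \citet{bauschke2011convex}, with no accompanying argument. So there is no in-paper proof to compare your proposal against.

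On its own merits, your route is sound. The inclusion $\supseteq$ is handled correctly. For $\subseteq$, the conjugate identity $(\phi\circ f)^*(v)=\min_{\alpha\ge 0}\{\phi^*(\alpha)+(\alpha f)^*(v)\}$ together with the decomposition into two Fenchel--Young equalities is a clean mechanism, and your treatment of the boundary case $\alpha^\star=0$ is right. The one place worth tightening is the justification of the $\sup/\inf$ swap: rather than a bare minimax appeal, it is cleanest to use monotonicity of $\phi$ to rewrite
\[
(\phi\circ f)^*(v)\;=\;-\inf_{x,t}\bigl\{\phi(t)-\langle v,x\rangle : f(x)\le t\bigr\}
\]
and then invoke Lagrangian duality for this single-inequality convex program; the stated hypothesis $(\ri\,\range f+\R_{++})\cap\ri\,\dom\phi\neq\emptyset$ is precisely a Slater-type interior point, which delivers both zero duality gap and attainment of the optimal multiplier $\alpha^\star$ in one stroke. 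For reference, the argument in Bauschke--Combettes is organized differently---via a lifting to $H\times\R$ and the subdifferential sum rule applied to $(x,t)\mapsto \phi(t)+\iota_{\epi f}(x,t)$---but the qualification condition plays the same interiority role there.
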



\begin{lemma}[Gap in primal form]
\label{lem:gapprimal}
For $f$ everywhere differentiable,
\[
\gap(x,-\nabla f(x)) = -\nabla f(x)^T(s-x) + h(x) - h(s),
\]
where
\[
s = \argmin{s} \; \nabla f(x)^Ts + h(s).
\]
\end{lemma}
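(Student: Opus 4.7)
The plan is to reduce the claim to two applications of Fenchel--Young equality. Write out the dual-gap definition from Section \ref{sec:prelim} with $z = -\nabla f(x)$:
\begin{equation*}
\gap(x,-\nabla f(x)) \;=\; f(x) + \phi(\kappa_\mP(x)) + f^*(\nabla f(x)) + \phi^*(\sigma_\mP(-\nabla f(x))).
\end{equation*}
The first and third terms pair up naturally because $f$ is differentiable, while the second and fourth are the penalty $h(x) = \phi(\kappa_\mP(x))$ and its conjugate.

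First I would collapse the smooth pair. Since $f$ is everywhere differentiable, $\nabla f(x) \in \partial f(x)$, and the Fenchel--Young inequality is tight: $f(x) + f^*(\nabla f(x)) = x^T\nabla f(x)$. This replaces the conjugate of $f$ with a plain inner product. For the penalty pair, I would invoke the ``Conjugate of nested function'' lemma already established in Appendix \ref{appx:helpful}, which gives $h^*(z) = \phi^*(\sigma_\mP(z))$ under monotonicity of $\phi$ (Assumption \ref{asspt:phi1}). Thus
\begin{equation*}
\phi^*(\sigma_\mP(-\nabla f(x))) \;=\; h^*(-\nabla f(x)) \;=\; \sup_{s'}\,\bigl\{-\nabla f(x)^T s' - h(s')\bigr\},
\end{equation*}
and by the definition of $s$ as the minimizer of $\nabla f(x)^T s' + h(s')$, this supremum is attained with value $-\nabla f(x)^T s - h(s)$.

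Combining the two identities yields
\begin{equation*}
\gap(x,-\nabla f(x)) \;=\; x^T\nabla f(x) + h(x) - \nabla f(x)^T s - h(s) \;=\; -\nabla f(x)^T(s-x) + h(x) - h(s),
\end{equation*}
which is the claim. There is no serious obstacle; the only things to check are that the conjugate-of-nested-function lemma genuinely applies (guaranteed because $\phi$ is monotonically nondecreasing by assumption) and that the supremum defining $h^*$ is attained, which is ensured by Property \ref{prop:phicond} combined with finiteness of the LMO on the compact set $\mP$.
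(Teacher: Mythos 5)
Your proof is correct and follows essentially the same route as the paper: Fenchel--Young equality for the differentiable $f$, plus the identification of $\phi^*(\sigma_\mP(\cdot))$ with $h^*$ (via the nested-conjugate lemma) so that the definition of $s$ collapses the conjugate of the penalty, followed by substitution. Your additional remarks on attainment and on explicitly citing the nested-conjugate lemma only make explicit what the paper leaves implicit.
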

\begin{proof}
By construction of $s$, $\nabla f(x)^Ts + h(s) = h^*(-\nabla f(x))$. And, in general, for convex lower semicontinuous $f$, $f(x) + f^*(\nabla f(x)) = x^T\nabla f(x)$. The rest follows from substitution.
\end{proof}

\section{Proofs from Section \ref{sec:prelim}}
\label{appx:sec2}

\begin{customprop}{\ref{prop:support_opt}}[Support optimality condition]
If $p_i$ is in the support of $x^*$ a minimizer of 
\[
\minimize{x} \; f(x) + \phi(\kappa_\mP(x))
\] 
where $f$ is everywhere differentiable and $\phi$ satisfies Assumptions \ref{asspt:phi1} and \ref{asspt:phi2}, 
then   ${-\nabla f(x^*)^Tp_i = \sigma_\mP(-\nabla f(x^*))}$.
\end{customprop}

\begin{proof}
Without loss of generality, we assume $0\in \mP$, since $\kappa_\mP = \kappa_{\mP \cup \{0\}}$. Denote  $z^* = -\nabla f(x^*)$. Now, applying Lemma~\ref{lem:chainrulesubdiff}, the optimality condition for \eqref{eq:main} is 
\begin{equation}
z^*  \in \alpha \partial   \kappa_\mP (x^*),
\label{eq:opthelp1}
\end{equation}
for some $\alpha \in \partial \phi(\xi)$ with $\xi = \kappa_\mP(x^*)$. Since $\phi$ is monotonically nondecreasing over $\R^+$, $\alpha \geq 0$. If $\alpha = 0$ then the property is trivially true. Now consider $\alpha > 0$.
Noting that $\kappa_\mP = \sigma_{\mP^\circ}$ where $\mP^\circ$ is the polar set of $\mP$, 
\begin{eqnarray*}
\alpha^{-1}z^* &=& \argmax{z\in \mP^\circ}\;(x^*)^Tz \\
\iff (z^*)^Tx^* 
&=& \kappa_{\mP^\circ}(z^*)\sigma_{\mP^\circ}(x^*) =  \kappa_\mP(x^*)\sigma_\mP(z^*).
\end{eqnarray*}
 Now take the conic decomposition $x^* = \sum_{i=1}^m c_i p_i$ where $c_i \geq 0$, and 
\[
 (x^*)^Tz^*  = \sum_{i=1}^m c_i p_i^Tz^* \leq \underbrace{\left(\sum_{i=1}^m c_i \right)}_{=\kappa_\mP(x^*)}\underbrace{( p_i^Tz^*)}_{\leq \sigma_\mP(z^*)},
\]
which is with equality if and only if $p_i^Tz^*= \sigma_\mP(z^*))$ whenever $c_i > 0$. 

\end{proof}

\begin{customprop}{\ref{prop:phicond}}[Well-defined and converging gCGM]
{$\;$}
\begin{itemize}
\item 
Assumption \ref{asspt:phi1} ensures that the conjugate function
\begin{equation}
\phi^*(\nu) := \sup_\xi \; \nu\xi - \phi(\xi)
\label{eq:phiconjdef2}
\end{equation}
is finite-valued and attained for all $\nu \geq 0$. Moreover, there always exists a finite maximizer $\xi$.

\item 
Assumption \ref{asspt:phi2} further ensures that the derivative of $\phi^*$ is asymptotically nonexpansive; e.g. 
\[
(\phi^*)'(\nu) \leq \frac{\nu}{\mu_\phi}+\xi_0
\]
for some finite-valued $\xi_0$.

\end{itemize}
\end{customprop}

\begin{proof}
{$\;$}
\begin{itemize}
\item  \textbf{Assumption 1.}
Since $\phi$ has nonempty domain, ${\phi}^*(\nu) > -\infty$ for all $\nu$. It can be shown that ${\phi}^*(\nu) < +\infty$ whenever there exists a finite $\xi \geq 0$ where $\nu\in \partial {\phi}(\xi)$, since then 
\[
\phi^*(\nu) = \underbrace{\xi\nu}_{\text{finite}} - \underbrace{\phi(\xi)}_{\geq \phi(0)}.
\]

Now define $\mS := [{\phi}'(0),+\infty)$. By the assumptions on $\phi$, for any $\nu \in \mS$, there exists some finite $\xi \geq 0$ where $\nu \in \partial \phi(\xi)$.

Now consider $\nu\in [0,\phi'(0))$. By convexity,  for all $\xi \geq 0$,
\[
\phi(\xi)-\phi(0) \geq \phi'(0)\xi \geq 0,
\]
and thus for all such $\nu$, $\nu \in \partial \phi(0)$.

Therefore there always exists a finite maximizer $\xi$ of \eqref{eq:phiconjdef2}; since also  $\phi^*(\nu)$ is not $\pm \infty$, then \eqref{eq:phiconjdef2} is always attained.


\item \textbf{Assumption 2.} 
Assume that $\phi_0$ is as large as possible; e.g., there exists some finite $\xi_0$ where $\phi(\xi_0) = \mu\xi_0^2 + \phi_0$. Then for all $\xi \geq \xi_0$, for all $\nu\in \partial \phi(\xi)$,
\[
\mu (\xi^2-\xi_0^2 )\leq \phi(\xi)-\phi(\xi_0) \leq \nu(\xi-\xi_0)
\]
and therefore
\[
\nu\geq\mu \frac{(\xi+\xi_0)(\xi-\xi_0)}{\xi-\xi_0}  = \mu \xi+\mu \xi_0 \; \iff \; \xi \leq  \mu^{-1} \nu -  \xi_0.
\]
Therefore, for any $\xi$, any $\nu\in \partial \phi( \xi)$ must satisfy 
\[
\xi \leq \max\{\xi_0,  \mu^{-1} \nu -  \xi_0\} \leq   \mu^{-1} \nu +\xi_0.
\]
By Fenchel Young, this must apply to all $\xi \in \partial \phi^*(\nu)$, which completes the proof.

\end{itemize}
\end{proof}

\begin{customprop}{\ref{prop:invariance}}[Invariance]
Consider two equivalent problems where $f(x) = g(Ax)$ and $\mQ = A\mP$:
\[
\mathrm{(P1)} \quad \minimize{x} \quad f(x) + \phi(\kappa_\mP(x)),
\]
\[
\mathrm{(P2)} \quad \minimize{w} \quad g(w) + \phi(\kappa_\mQ(w)).
\]
For any $x$, $w = Ax$,
\begin{itemize}
\item  $x$ optimizes (P1) $\iff$ $w$ optimizes (P2).

\textbf{Proof.} This follows from verifying the optimality conditions.

First,  since $\kappa_\mP(x) = \kappa_\mQ(w)$ (next bullet), $\partial \phi(\kappa_\mP(x)) = \partial \phi(\kappa_\mP(w))$. 

Second, taking $w'=Ax'$ for any $x'$,  $s\in \partial \kappa_\mQ(w)\iff $
\[
 \forall w', \; \underbrace{\kappa_\mQ(w) - \kappa_\mQ(w') }_{=\kappa_\mP(x)-\kappa_\mP(x')}\leq s^T(w-w') = (A^Ts)^T(x-x'),
\]
and thus
$A^T\partial \kappa_\mQ(w)  =  \partial \kappa_\mP(x) \cap \range(A^T)$.

Finally, note that $\nabla f(x) = A^T\nabla g(w)$. Therefore, picking some $\alpha \in \partial \phi(\kappa_\mP(x))$,
\[
0\in A^T\nabla g(w) + \alpha A^T\partial \kappa_\mQ(w)  
\]
is equivalent to 
\[
0\in  \nabla f(x) + \alpha \partial \kappa_\mP(x),
\]
since $\nabla f(x) \in \range(A^T)$.
\item 
$\kappa_\mP(x) = \kappa_\mQ(w)$ 

\textbf{Proof.}
The gauge definition can also be written as 
\[
\kappa_\mP(x) = \inf\{\mu \geq 0 : x\in \mu\mP\}.
\]
From this, it can be seen that 
\[
\inf\{\mu \geq 0 : x\in \mu\mP\} = \inf\{\mu \geq 0 : Ax  \in \mu A\mP \}.
\]

\item $\sigma_{\mP}(-\nabla f(x)) - b^T\nabla g(w) = \sigma_{\mQ}(-\nabla g(w)$ 

\textbf{Proof.} Take $z = -\nabla f(x)$ and $v = -\nabla g(w)$. Then
\begin{eqnarray*}
\sigma_\mP(z) &=& \sigma_\mP(-A^Tv) \\
&=& \sup_{s\in \mP}s^T(A^Tv) = \sup_{s\in \mP}(As)^Tv =  \sup_{s\in A\mP+b} s^Tv=  \sigma_{\mQ}(v).
\end{eqnarray*}

\item 
$\LMO_\mQ(-\nabla g(w)) = A\;\LMO_\mP(-\nabla f(x)) $

\textbf{Proof.} Take $z = -\nabla f(x)$ and $v = -\nabla g(w)$. Then

\begin{eqnarray*}
\LMO_\mQ(v)&=&\argmax{s\in A\mP}\;v^Ts \\
&=& A \left(\argmax{s\in \mP}\;v^T(As)\right) = A\left(\argmax{s\in \mP}\;(A^Tv)^Ts\right) = A\;\LMO_\mP(z) 
\end{eqnarray*}

\item 
 $f(x) = g(Ax+b)$ is $L$-smooth and $\mu$-strongly convex with respect to $\mP$ iff $g$ is $L$-smooth and  $\mu$-strongly convex with respect to $ \mQ$
 
\textbf{Proof.}
 This follows from the observation that
 \[
 f(x) - f(x') - \nabla f(x')^T(x-x') 
 =g(w) - g(w') - \nabla g(w')^T(w-w') ,
 \]
and from $\kappa_\mP(x) = \kappa_\mQ(w)$ .

\item  $\gap(x,-\nabla f(x)) = \gap(w,-\nabla g(w))$. 

\textbf{Proof:}
\begin{eqnarray*}
u &:=& \argmin{u} \; \nabla g(w)^Tu + \phi(\kappa_\mQ(u)) = 
\argmin{u} \; \nabla f(x)^T(Au) + \phi(\kappa_\mP(Au)) ,
\end{eqnarray*}
and thus
\[ s := \argmin{s} \; \nabla f(x)^Ts + \phi(\kappa_\mP(s)) = Au.
\]
The rest follows from \eqref{lem:gapprimal} and noting that 
\[
-\nabla f(x)^T(s-x) +\phi(\kappa_\mP(x)) - \phi(\kappa_\mP(s))
= -\nabla g(w)^T(u-w) + \phi(\kappa_\mQ(w)) - \phi(\kappa_\mP(u)).
\]
 \end{itemize}
\end{customprop}

\section{Generalized smoothness}
\label{appx:gensmooth}

The following bound holds for any closed convex $\mP$, which may or not be compact or symmetric.
\begin{lemma}[Smoothness equivalences]
Suppose that  $f$ is $L$-smooth with respect to $\kappa_\mP$:
\begin{equation}
f(y)-f(x) \leq \nabla f(x)^T(y-x) + \frac{L}{2} \kappa_\mP(x-y)^2.
\label{eq:smoothness:app}
\end{equation}
Then the following also holds:
\begin{enumerate}
\item Expansiveness
\begin{eqnarray}
(\nabla f(x)-\nabla f(y))^T(x-y)\geq   \frac{1}{2L}(\sigma_\mP(\nabla f(x)-\nabla f(y))^2  + \sigma_\mP(\nabla f(y)-\nabla f(x))^2),
\label{eq:expansiveness:app}
\end{eqnarray}
\item Strongly convex conjugate
\begin{eqnarray}
f(y)-f(x) &\geq& \nabla f(x)^T(y-x) +  \frac{1}{2L}\sigma_\mP(\nabla f(y)-\nabla f(x))^2.
\label{eq:stronglyconvex:app}
\end{eqnarray}
\end{enumerate}
\end{lemma}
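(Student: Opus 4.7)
The plan is to first establish (\ref{eq:stronglyconvex:app}) (strong convexity of the conjugate) directly from the smoothness inequality (\ref{eq:smoothness:app}) using a ``centered'' minimization argument, and then derive (\ref{eq:expansiveness:app}) (expansiveness) as a free consequence by applying (\ref{eq:stronglyconvex:app}) with $x,y$ swapped and adding.

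For (\ref{eq:stronglyconvex:app}), I would fix $x$ and consider the auxiliary convex function $g(y) := f(y) - \nabla f(x)^Ty$, so that $\nabla g(y) = \nabla f(y) - \nabla f(x)$ and $x$ is a global minimizer of $g$. Since $f$ satisfies (\ref{eq:smoothness:app}), the same inequality holds verbatim for $g$, i.e., for all $y,z$,
\[
g(z) \leq g(y) + \nabla g(y)^T(z-y) + \tfrac{L}{2}\kappa_\mP(y-z)^2.
\]
Using $g(x) \leq g(z)$ and taking the infimum of the right-hand side over $z$, with the substitution $u = y-z$, reduces the claim to upper bounding
\[
\min_{u} \; -\nabla g(y)^T u + \tfrac{L}{2}\kappa_\mP(u)^2.
\]

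The core technical step is this minimization. Writing $v := \nabla g(y)$ and restricting to the ray $u = t s$ with $t\geq 0$ and $s \in \mP$ an attainer of $\sigma_\mP(v) = \sup_{p \in \mP} v^T p$, we have $\kappa_\mP(ts) \leq t$ and $-v^T(ts) = -t\,\sigma_\mP(v)$, so the min is bounded above by $\min_{t\geq 0}\, -t\sigma_\mP(v) + \tfrac{L}{2}t^2 = -\tfrac{1}{2L}\sigma_\mP(v)^2$, attained at $t = \sigma_\mP(v)/L$ (nonnegativity of $\sigma_\mP(v)$ follows from $0\in\mP$, which is the standard working assumption for gauges used throughout the paper). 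Substituting $v = \nabla f(y) - \nabla f(x)$ and unrolling the definition of $g$ yields exactly (\ref{eq:stronglyconvex:app}). The one subtle point to handle carefully is the asymmetry $\kappa_\mP(u) \neq \kappa_\mP(-u)$ when $\mP$ is not symmetric: because the smoothness hypothesis uses $\kappa_\mP(x-y)^2 = \kappa_\mP(-(y-x))^2$, the change of variables $u = y - z$ places $\kappa_\mP(u)$ (rather than $\kappa_\mP(-u)$) in the minimization, which is precisely what makes $\sigma_\mP(\nabla f(y)-\nabla f(x))$ (and not $\sigma_\mP(\nabla f(x)-\nabla f(y))$) emerge.

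Given (\ref{eq:stronglyconvex:app}), the proof of (\ref{eq:expansiveness:app}) is a one-liner: apply (\ref{eq:stronglyconvex:app}) once as stated and once with $x$ and $y$ exchanged, then add the two inequalities. The function-value terms cancel, the linear terms combine into $-(\nabla f(x)-\nabla f(y))^T(x-y)$, and the two quadratic $\sigma_\mP$-terms remain on the right, giving exactly (\ref{eq:expansiveness:app}). I expect the main obstacle to be the gauge/support-function Fenchel step in the minimization above; everything else is bookkeeping.
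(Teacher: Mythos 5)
Your argument is correct, and it rests on the same core mechanism as the paper's proof: anchor an auxiliary convex function ($g(y)=f(y)-\nabla f(x)^Ty$ in your version, $g(x)=f(x)-x^T\nabla f(y)$ in the paper's), apply the gauge-smoothness upper bound, and minimize the resulting gauge-quadratic model to produce the $-\tfrac{1}{2L}\sigma_\mP(\cdot)^2$ term. The organization differs in a way worth noting. The paper runs two separate computations: for \eqref{eq:expansiveness:app} it plugs in a specific $w\in\tfrac{1}{L}\sigma_\mP(-\nabla g(x))\,\partial\sigma_\mP(-\nabla g(x))$ to get a descent-type inequality and then adds it to its swap, and for \eqref{eq:stronglyconvex:app} it solves the model minimization exactly via optimality conditions. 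You instead prove \eqref{eq:stronglyconvex:app} once, bounding the minimization from above by restricting to the ray $u=ts$ with $s$ a maximizer of the support function (which only requires an upper bound, so no subdifferential calculus or exact optimality conditions are needed), and then obtain \eqref{eq:expansiveness:app} for free by adding \eqref{eq:stronglyconvex:app} to its $x\leftrightarrow y$ swap --- the same symmetrization trick the paper uses, but applied one level higher so that the first computation becomes unnecessary. Your bookkeeping of the gauge asymmetry (that the substitution $u=y-z$ puts $\kappa_\mP(u)$, hence $\sigma_\mP(\nabla f(y)-\nabla f(x))$, in the bound) is in fact more careful than the paper's, whose two parts end with mismatched $\sigma_\mP$ arguments. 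Two small caveats, neither a gap in the paper's setting: your ray argument needs $\sigma_\mP(v)\ge 0$ (you note this follows from $0\in\mP$; the paper's choice of $w$ implicitly needs the same, and in all uses the set is the symmetrization $\widetilde\mP=\mP\cup-\mP$, for which $\sigma_{\widetilde\mP}\ge 0$ automatically), and it needs the supremum $\sigma_\mP(v)$ to be attained, which holds for the compact $\mP$ of the paper and can otherwise be replaced by a maximizing sequence.
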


\begin{proof}
The proof largely follows from \cite{nesterovlectures}, mildly adapted.
\begin{itemize}
\item First prove \eqref{eq:smoothness:app} $\Rightarrow$  \eqref{eq:expansiveness:app}.
Construct $g(x) = f(x) - x^T\nabla f(y)$, which is convex, also $L$-smooth, and has minimum at $x = y$.
Then, for any $w$, 
\[
g(y) \leq g(x+w) \overset{(a)}{\leq} g(x) + \nabla g(x)^Tw + \frac{L}{2}\kappa_\mP(w)^2,
\]
where (a) is since $g$ is $L$ smooth and convex.

Now pick  
\[
w \in \frac{1}{L}\sigma_\mP(-\nabla g(x))\partial \sigma_\mP(-\nabla g(x)),
\]
which implies
\begin{eqnarray*}
\frac{L}{\sigma_\mP(-\nabla g(x))}  w & \in& \argmax{\kappa_\mP(u)\leq 1}\;\langle u, -\nabla g(x)\rangle = \partial \sigma_\mP(-\nabla g(x)),
\end{eqnarray*}
and thus
\[
\kappa_\mP(w) = \frac{\sigma_\mP(-\nabla g(x))}{L},
\]
and
\[
\langle w, -\nabla g(x)\rangle = \frac{1}{L}\sigma_\mP(-\nabla g(x))^2.
\]
Then
\[
\frac{L}{2}\kappa_\mP(w)^2 = \frac{1}{2L} \sigma_\mP(-\nabla g(x))^2,
\]
and
plugging in the construction for $g$ gives
\begin{eqnarray*}
g(y) -g(x)&\leq&  \underbrace{\nabla g(x)^Tw + \frac{L}{2}\kappa_\mP(w)^2}_{-\frac{1}{2L}\sigma_\mP(-\nabla g(x))^2}\\
\iff f(y) -f(x) &\leq&  (y-x)^T\nabla f(y)  - \frac{1}{2L} \sigma_\mP(\nabla f(y)-\nabla f(x))^2.
\end{eqnarray*}
Applying the last inequality twice gives
\begin{eqnarray*}
(y-x)^T(\nabla f(y)-\nabla f(x))\leq  \frac{1}{2L} ((\sigma_\mP(\nabla f(x)-\nabla f(y))^2  +(\sigma_\mP(\nabla f(y)-\nabla f(x))^2).
\end{eqnarray*}

\item Now prove \eqref{eq:smoothness:app} $\Rightarrow$  \eqref{eq:stronglyconvex:app}.
Using the same $g$ as before, consider
\[
\min_z\; g(x)+\langle \nabla g(x), z-x\rangle + \frac{L}{2}\kappa_\mP(x-z)^2
=\min_w\; \langle \nabla g(x), w\rangle + \frac{L}{2}\kappa_\mP(w)^2.
\]
Using optimality conditions, picking $w = z-y$, we have
\[
0\in \nabla g(x) + L \kappa_\mP(w)\partial \kappa_\mP(w)\iff -\frac{1}{L\kappa_\mP(w)}\nabla g(x) = \argmax{\sigma_\mP(u)\leq 1}\langle u,w\rangle ,
\]
which implies
\[
\sigma_\mP(-\nabla g(x)) = L\kappa_\mP(w), \qquad  -\frac{1}{L\kappa_\mP(w)}\langle w,\nabla g(x)\rangle = \kappa_\mP(w).
\]
so
\[
\langle w, -\nabla g(x)\rangle = L\kappa_\mP(w)^2 = \frac{1}{L}\sigma_\mP(-\nabla g(x))^2,
\]
and overall
\[
g(y) \geq \min_z\; g(x)+\langle \nabla g(x), z-x\rangle + \frac{L}{2}\kappa_\mP(x-z)^2 = g(x) - \frac{1}{2L}\sigma_\mP(-\nabla g(x))^2.
\]
Plugging in $f$ gives
\[
f(y)-f(x) \geq (y-x)^T\nabla f(y) - \frac{1}{2L}\sigma_\mP(\nabla f(y)-\nabla f(x))^2.
\]

\end{itemize}

\end{proof}

\begin{corollary}[Uniqueness of gradient] If \eqref{eq:smoothness:app} holds and $0\in\inte~\mP$, then $\nabla f(x)$ is unique at the optimum.
\end{corollary}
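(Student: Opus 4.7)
The plan is to show that any two optima $x_1^*, x_2^*$ of the convex problem \eqref{eq:main} must satisfy $\nabla f(x_1^*)=\nabla f(x_2^*)$, by combining (i) convexity along the segment of optima, (ii) the expansiveness bound \eqref{eq:expansiveness:app}, and (iii) the hypothesis $0\in\inte~\mP$.

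First I would observe that the set of minimizers of \eqref{eq:main} is convex, since $f+\phi\circ\kappa_\mP$ is convex (as $\phi$ is convex and nondecreasing and $\kappa_\mP$ is a gauge). Hence for $\lambda\in[0,1]$ the point $x_\lambda=\lambda x_1^*+(1-\lambda)x_2^*$ is also an optimum. Writing the constancy of $f(x_\lambda)+\phi(\kappa_\mP(x_\lambda))$ on the segment and using separately the convexity of $f$ and of $\phi\circ\kappa_\mP$, one sees that both inequalities
\[
f(x_\lambda)\leq \lambda f(x_1^*)+(1-\lambda)f(x_2^*), \qquad \phi(\kappa_\mP(x_\lambda))\leq \lambda\phi(\kappa_\mP(x_1^*))+(1-\lambda)\phi(\kappa_\mP(x_2^*))
\]
must hold with equality. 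In particular $f$ is affine along the segment, which yields
\[
(\nabla f(x_1^*)-\nabla f(x_2^*))^T(x_1^*-x_2^*)=0.
\]

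Next I would plug this into the expansiveness bound \eqref{eq:expansiveness:app}, which is available since $f$ is $L$-smooth w.r.t.\ $\kappa_\mP$:
\[
0=(\nabla f(x_1^*)-\nabla f(x_2^*))^T(x_1^*-x_2^*) \geq \frac{1}{2L}\bigl(\sigma_\mP(\nabla f(x_1^*)-\nabla f(x_2^*))^2+\sigma_\mP(\nabla f(x_2^*)-\nabla f(x_1^*))^2\bigr).
\]
Since support functions are nonnegative, both terms vanish, so $\sigma_\mP(\pm(\nabla f(x_1^*)-\nabla f(x_2^*)))=0$.

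The final ingredient is to turn vanishing of $\sigma_\mP$ in both directions into vanishing of the argument; this is where the hypothesis $0\in\inte~\mP$ is used. Pick $\varepsilon>0$ such that the Euclidean ball $\varepsilon\mB_2\subset \mP$. Then for $z:=\nabla f(x_1^*)-\nabla f(x_2^*)$, we have $\sigma_\mP(z)\geq\varepsilon\|z\|_2$ and similarly $\sigma_\mP(-z)\geq\varepsilon\|z\|_2$, so $\sigma_\mP(z)=\sigma_\mP(-z)=0$ forces $z=0$, i.e.\ $\nabla f(x_1^*)=\nabla f(x_2^*)$. The only slightly delicate step is the first one, namely extracting affineness of $f$ from the fact that $f+\phi\circ\kappa_\mP$ is constant on the segment; this requires treating the two convex summands separately rather than invoking strict convexity, which we do not have.
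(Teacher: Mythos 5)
Your proof is correct, but it follows a genuinely different route from the paper's. The paper reads ``the optimum'' in the constrained sense: it takes two feasible minimizers with $f(x)=f(x^*)$, invokes the variational-inequality optimality condition $\nabla f(x^*)^T(x-x^*)\geq 0$, and plugs this into the strongly-convex-conjugate inequality \eqref{eq:stronglyconvex:app} to force $\sigma_\mP(\nabla f(x)-\nabla f(x^*))=0$, finishing with $0\in\inte~\mP$ exactly as you do. You instead treat the penalized problem \eqref{eq:main}: you use convexity of the solution set, split the equality along the segment of optima into the two convex summands to conclude that $f$ is affine there, deduce $(\nabla f(x_1^*)-\nabla f(x_2^*))^T(x_1^*-x_2^*)=0$, and then apply the expansiveness bound \eqref{eq:expansiveness:app} rather than \eqref{eq:stronglyconvex:app}. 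Your affineness-splitting step is what lets you dispense with the assumption $f(x_1^*)=f(x_2^*)$, which in the penalized setting is not automatic (the two optima could a priori trade $f$ against the penalty), so your argument is the more careful one for \eqref{eq:main}; the paper's argument is shorter but tacitly relies on the constrained interpretation where equal objective values mean equal $f$-values. Two cosmetic remarks: the terms you bound are squares, so their nonnegativity does not actually require nonnegativity of the support function (though $0\in\inte~\mP$ gives that too), and a single direction $\sigma_\mP(z)=0$ already forces $z=0$ once $\varepsilon\mB_2\subset\mP$, so the symmetrized pair is not needed.
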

\begin{proof}
Assume that $f(x)=f(x^*)$ for some $x\neq x^*$, $x$ feasible. Then by optimality conditions, \\
${\nabla f(x^*)^T(x^*-x)\leq 0}$, and thus
\[
\underbrace{f(x)-f(x^*)}_{=0} \geq \underbrace{\nabla f(x^*)^T(x-x^*)}_{\geq 0} + \frac{1}{2L}\sigma_\mP(\nabla f(x)-\nabla f(x^*))^2,
\]
which implies that $\sigma_\mP(\nabla f(x)-\nabla f(x^*)) = 0$. Since $0\in \inte~\mP$, this can only happen if $\nabla f(x) = \nabla f(x^*)$.
\end{proof}

\begin{lemma}[Hessian sufficient condition]
For some closed convex set $\mP$, and some convex twice differentiable function $f:\R^n\to\R$, suppose that 
\[
L = \sup_{p_1,p_2\in \mP} p^T\nabla^2 f(x)p.
\]
Then 
\[
f(x) - f(y) - \nabla f(y)^T(x-y) \leq \frac{L}{2}\kappa_{\mP}(x-y)^2.
\]
\end{lemma}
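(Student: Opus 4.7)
The plan is to write the left-hand side as a second-order Taylor remainder along the segment from $y$ to $x$, then uniformly bound the integrand using the Hessian hypothesis, after factoring out $\kappa_\mP(x-y)$.

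First I would invoke the standard integral identity (which holds because $f$ is convex and twice continuously differentiable):
\[
f(x) - f(y) - \nabla f(y)^T(x-y) \;=\; \int_0^1 (1-t)\,(x-y)^T \nabla^2 f\bigl(y + t(x-y)\bigr)(x-y)\,dt.
\]
It then suffices to bound the integrand by $L\,\kappa_\mP(x-y)^2$ uniformly in $t \in [0,1]$, since $\int_0^1 (1-t)\,dt = 1/2$.

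Next I would reduce to the nontrivial case. If $\kappa_\mP(x-y) = +\infty$ the claim is vacuous, and if $\kappa_\mP(x-y) = 0$ then $x - y$ lies in the lineality directions of $\mP$, in which case the relevant quadratic form vanishes and both sides equal $0$. Otherwise set $\mu := \kappa_\mP(x-y) \in (0,\infty)$. Using the Minkowski-functional form of the gauge, $\kappa_\mP(v) = \inf\{\mu \geq 0 : v \in \mu \mP\}$, closedness of $\mP$ ensures the infimum is attained, so we may write $x - y = \mu\,p$ for some $p \in \mP$. Substituting this factorization into the integrand gives, for every $z$ on the segment,
\[
(x-y)^T \nabla^2 f(z)(x-y) \;=\; \mu^2\, p^T \nabla^2 f(z)\, p \;\leq\; \mu^2 L \;=\; L\,\kappa_\mP(x-y)^2,
\]
by the hypothesis on $L$ (naturally read as $L = \sup_{z \in \R^n,\, p \in \mP} p^T \nabla^2 f(z) p$; the $p_1,p_2$ indices in the statement appear to be a typographical slip, but the bilinear reading reduces to the quadratic one by Cauchy--Schwarz in the PSD inner product $\langle \cdot, \nabla^2 f(z)\cdot\rangle$). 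Integrating against $(1-t)\,dt$ yields the claimed inequality.

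The argument is essentially mechanical, so the only mild obstacle is justifying attainment in the gauge representation $x - y = \mu\,p$; this is immediate from closedness of $\mP$ (used throughout the paper), and the degenerate cases $\mu \in \{0, \infty\}$ are handled separately as above. No convexity of $f$ beyond twice-differentiability is actually needed, though convexity guarantees $\nabla^2 f \succeq 0$ and hence a nonnegative integrand.
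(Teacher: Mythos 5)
Your proof is correct and follows essentially the same route as the paper: both express the Bregman divergence $f(x)-f(y)-\nabla f(y)^T(x-y)$ as an integral of the Hessian quadratic form along the segment and bound that form by $L\,\kappa_\mP(x-y)^2$ using positive homogeneity of the gauge (the paper via a double integral and a normalized bilinear bound, you via the weighted single-integral remainder and the attained factorization $x-y=\mu p$, $p\in\mP$). Your explicit treatment of the degenerate cases $\kappa_\mP(x-y)\in\{0,+\infty\}$ is a small refinement the paper omits, but the argument is the same in substance.
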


\begin{proof}
By definition and positive homogeneity of $\kappa_\mP(x)$, more generally 
\[
L \geq \kappa_\mP(u)\kappa_\mP(v) u^T\nabla^2f(x) v,\;\forall u,v,x.
\]
Then
\begin{eqnarray*}
D_f(y||x) &=& f(x)-f(y)-\nabla f(y)^T(x-y) \\
&=& \int_0^1  (\nabla f(x+(y-x)t)-\nabla f(x))^T(y-x)dt\\
&=& \int_0^1 \int_0^t (y-x)^T\nabla^2 f(x+(y-x)s) (y-x)\;ds \;dt \leq L\kappa_\mP(y-x)^2 \int_0^1\int_0^t ds dt =  \frac{L}{2}\kappa_\mP(y-x)^2.
\end{eqnarray*}
\end{proof}

\footnote{See also \citet[Appendix A]{mirrokni}.}

\begin{lemma}[Gradient suboptimality bound]
\label{lem:gradsubopt}
Suppose that $x^* = \argmin{x \in \R^d}\;f(x)+h(x)$ where $f$ is $L$-smooth with respect to $\sigma_\mP$ and $h$ is convex. Then 
\[
f(x)-f(x^*)+ h(x)-h(x^*)\geq \frac{1}{L}\sigma_{\widetilde\mP}(\nabla f(x)-\nabla f(x^*))^2.
\]
\end{lemma}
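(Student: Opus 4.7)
The plan is to combine three ingredients: the strongly-convex-conjugate consequence of $L$-smoothness proved as \eqref{eq:stronglyconvex:app}, the first-order optimality condition for $x^*$, and the symmetry built into $\widetilde\mP = \mP \cup -\mP$.

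First, I would extract a subgradient at $x^*$. Since $x^*$ minimizes $f+h$ and $f$ is differentiable, first-order optimality gives $-\nabla f(x^*) \in \partial h(x^*)$. The subgradient inequality for convex $h$ then yields
\[
h(x) - h(x^*) \;\geq\; -\nabla f(x^*)^T(x-x^*).
\]
Second, I would apply the strongly-convex-conjugate inequality \eqref{eq:stronglyconvex:app}, reading the hypothesis ``$L$-smooth with respect to $\sigma_\mP$'' as the paper's own definition of $L$-smoothness with respect to $\kappa_\mP$. Taking $y=x$ and swapping the roles of $x$ and $x^*$ in that inequality gives
\[
f(x) - f(x^*) \;\geq\; \nabla f(x^*)^T(x-x^*) + \tfrac{1}{2L}\sigma_\mP(\nabla f(x) - \nabla f(x^*))^2.
\]
Summing the two displays cancels the $\nabla f(x^*)^T(x-x^*)$ terms and produces
\[
f(x)-f(x^*) + h(x)-h(x^*) \;\geq\; \tfrac{1}{2L}\sigma_\mP(\nabla f(x) - \nabla f(x^*))^2.
\]

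Third, I would symmetrize to promote $\sigma_\mP$ to $\sigma_{\widetilde\mP}$. Swapping the two arguments in the definition of $L$-smoothness shows that $f$ is simultaneously $L$-smooth with respect to $\kappa_{-\mP}$, since $\kappa_{-\mP}(u) = \kappa_\mP(-u)$; rerunning the same two-step argument with $-\mP$ in place of $\mP$ yields the twin bound
\[
f(x)-f(x^*) + h(x)-h(x^*) \;\geq\; \tfrac{1}{2L}\sigma_\mP(\nabla f(x^*) - \nabla f(x))^2.
\]
Taking the larger of the two and using $\sigma_{\widetilde\mP}(z) = \max\{\sigma_\mP(z),\sigma_\mP(-z)\}$ then yields a bound with $\sigma_{\widetilde\mP}(\nabla f(x) - \nabla f(x^*))^2$ on the right.

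The main obstacle is the coefficient: the plan above delivers $\tfrac{1}{2L}$, whereas the statement claims $\tfrac{1}{L}$. A sanity check on $f(x)=\tfrac{L}{2}\|x\|_2^2$ with $h\equiv 0$ and $\mP$ the Euclidean unit ball (so $\widetilde\mP=\mP$ and $\sigma_{\widetilde\mP}=\|\cdot\|_2$) shows that the bound with $\tfrac{1}{L}$ cannot hold in general, so I suspect a factor-of-two typo; before accepting that reading I would re-examine the hypothesis ``$L$-smooth with respect to $\sigma_\mP$'' to see whether a stronger dual-side smoothness notion is intended that would absorb the missing factor of two, in which case the adding-both-sides trick (using that $\sigma_{\widetilde\mP}(z)=\sigma_{\widetilde\mP}(-z)$) could double the coefficient after canceling linear terms.
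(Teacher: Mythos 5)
Your derivation is correct, and your suspicion about the constant is justified: the inequality as stated, with $\tfrac{1}{L}$, is false in general, and your quadratic sanity check is a genuine counterexample (take $d=1$, $f(x)=\tfrac{L}{2}x^2$, $h\equiv 0$, $\mP=[-1,1]$, so $x^*=0$: the left side is $\tfrac{L}{2}x^2$ while the right side is $Lx^2$). The correct constant obtainable from the stated hypotheses is $\tfrac{1}{2L}$, exactly what your argument produces, and your symmetrization step (running the argument for $\mP$ and $-\mP$ separately and taking the maximum, rather than assuming smoothness with respect to $\mP$ implies smoothness with respect to $\widetilde\mP$) is handled more carefully than in the paper, which interchanges $\mP$ and $\widetilde\mP$ freely in this lemma.

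The paper's own proof takes a different route, and that is where its extra factor of two appears. It starts from the expansiveness (co-coercivity) inequality \eqref{eq:expansiveness:app}, which for a symmetric set gives $\tfrac{1}{L}\sigma_{\widetilde\mP}(\nabla f(x)-\nabla f(x^*))^2 \leq (\nabla f(x)-\nabla f(x^*))^T(x-x^*)$, and then attempts to bound this inner product above by $f(x)-f(x^*)+h(x)-h(x^*)$, citing in its step (b) ``convexity of $f+h$''. That step is invalid: with $\alpha\in\partial h(x)$, the subgradient inequality at $x$ gives $(\nabla f(x)+\alpha)^T(x-x^*) \geq f(x)-f(x^*)+h(x)-h(x^*)$, the reverse of what is claimed, and the same quadratic example shows the claimed direction genuinely fails, since $(\nabla f(x)-\nabla f(x^*))^T(x-x^*)=Lx^2 > \tfrac{L}{2}x^2$. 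Your route, namely the strong-convexity-of-the-conjugate inequality \eqref{eq:stronglyconvex:app} at base point $x^*$ combined with $-\nabla f(x^*)\in\partial h(x^*)$, is the standard argument and is tight, so no stronger reading of ``$L$-smooth with respect to $\sigma_\mP$'' rescues $\tfrac{1}{L}$; the statement should simply read $\tfrac{1}{2L}$. Note the contrast with Lemma \ref{lem:gapboundsres}, where co-coercivity is applied to the duality gap, which does dominate the full inner product $(x-x^*)^T(\nabla f(x)-\nabla f(x^*))$, so the constant there is fine. The weakened constant is also harmless downstream: Lemma \ref{lem:gradsubopt} enters Lemma \ref{lem:onestepbound} only through the bound $\sigma(z-z^*)\leq\sqrt{L\Delta}$, which becomes $\sqrt{2L\Delta}$ and changes only absolute constants in the $O(1/t)$ rates.
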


\begin{proof}
For any $\alpha\in \partial h(x)$ and $\alpha^*\in \partial h(x^*)$,
\begin{eqnarray*}
\frac{1}{L}\sigma_{\mP}(\nabla f(x)-\nabla f(x^*))^2 &\overset{(a)}{\leq}& (\nabla f(x)-\nabla f(x^*))^T(x-x^*) + {\underbrace{(\alpha+\alpha^*-\alpha-\alpha^*)}_{0}}^T(x-x^*)\\
&=& (\nabla f(x)+\alpha)^T(x-x^*) +
{\underbrace{(-\nabla f(x^*)-\alpha^*)}_{=0}}^T(x-x^*) +
\underbrace{(\alpha^*-\alpha)^T(x-x^*)}_{\leq 0}\\
&\overset{(b)}{\leq}& f(x)-f(x^*) + h(x)-h(x^*);
\end{eqnarray*}
we derive (a) from expansiveness, and (b) from convexity of $f+h$.
\end{proof}

 \section{Proofs from Section \ref{sec:mainresults}}
\label{appx:sec3}

The following Lemma will be used in computing the objective value bound.
\begin{lemma}[One step value bound] 
\label{lem:onestepbound}
For $f$ $L$-smooth with respect to $\kappa_\mP$ and $\phi$ $\mu$-convex, 
\begin{equation}
g(x^{(t+1)})-g(x^{(t)}) \leq -\theta^{(t)} \gap(x^{(t)},\nabla f(x^{(t)})) + \frac{(\theta^{(t)})^2}{2}(6L(\sigma(-\nabla f(x^*)+\nu_0)^2 +3 L^2 \Delta^{(t)} + 3 L^2 {\bar\Delta}^{(t-1)})
\label{eq:lem:onestepbound}
\end{equation}
where 
\[
g(x):= f(x) + \underbrace{\phi(\kappa_\mP(x))}_{h(x)}, \quad \Delta^{(t)} = g(x^{(t)})-g(x^*),
\]
we take the sequence $\theta^{(t)} = 2/(t+1)$, 
and $\bar \Delta^{(t)}$ represents an averaged suboptimality:
\begin{equation}
\sqrt{\bar\Delta^{(t)}} = \left(\sum_{u=1}^t u\right)^{-1}\left(\sum_{u=1}^t u \sqrt{\Delta^{(u)}}\right).
\end{equation}
\end{lemma}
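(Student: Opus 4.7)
The plan is to follow the standard CGM one-step analysis, but adapted to the penalized/unbounded setting, where the main trick is to separately control the size of the new atom $s^{(t)}$ and of the current iterate $x^{(t)}$ through the symmetrized gauge $\kappa_{\widetilde\mP}$, without ever assuming a compact feasible set.

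\textbf{Step 1: the standard one-step descent.} Apply the $L$-smoothness of $f$ with respect to $\widetilde\mP$ (which dominates $\mP$) at $x^{(t+1)} = (1-\theta)x^{(t)}+\theta s^{(t)}$, $\theta=\theta^{(t)}$, to get $f(x^{(t+1)})-f(x^{(t)}) \leq \theta\,\nabla f(x^{(t)})^T(s^{(t)}-x^{(t)}) + \frac{L\theta^2}{2}\kappa_\mP(s^{(t)}-x^{(t)})^2$. Combine with convexity of $h=\phi\circ\kappa_\mP$, which yields $h(x^{(t+1)})-h(x^{(t)})\leq \theta(h(s^{(t)})-h(x^{(t)}))$, and invoke Lemma~\ref{lem:gapprimal} to recognize the bracketed first-order term as $-\theta\gap(x^{(t)},-\nabla f(x^{(t)}))$. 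This gives the skeleton inequality
\[
g(x^{(t+1)})-g(x^{(t)}) \leq -\theta\gap(x^{(t)},-\nabla f(x^{(t)})) + \tfrac{L\theta^2}{2}\kappa_\mP(s^{(t)}-x^{(t)})^2,
\]
so the entire task reduces to bounding $\kappa_\mP(s^{(t)}-x^{(t)})^2$ in terms of $\Delta^{(t)}$, $\bar\Delta^{(t-1)}$ and the constant $\sigma_\mP(-\nabla f(x^*))$.

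\textbf{Step 2: splitting by the symmetrized gauge.} Using subadditivity of $\kappa_{\widetilde\mP}$ (which satisfies $\kappa_{\widetilde\mP}\geq\kappa_\mP$) and $(a+b)^2\leq 2a^2+2b^2$, I would split
\[
\kappa_\mP(s^{(t)}-x^{(t)})^2 \leq 2\,\kappa_{\widetilde\mP}(s^{(t)})^2 + 2\,\kappa_{\widetilde\mP}(x^{(t)})^2,
\]
so the remaining work is to bound each summand by the data appearing on the right-hand side of \eqref{eq:lem:onestepbound}.

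\textbf{Step 3: bounding $\kappa_{\widetilde\mP}(s^{(t)})$.} Decompose $s^{(t)}=\xi^{(t)}\hat s^{(t)}$ with $\hat s^{(t)}=\LMO_\mP(-\nabla f(x^{(t)}))$ and $\xi^{(t)}\in\partial\phi^*(\sigma_\mP(-\nabla f(x^{(t)})))$ from \eqref{eq:genlmo}; since $\hat s^{(t)}\in\mP\subseteq\widetilde\mP$, this gives $\kappa_{\widetilde\mP}(s^{(t)})\leq\xi^{(t)}$. Property~\ref{prop:phicond} (asymptotic nonexpansiveness of $(\phi^*)'$) yields $\xi^{(t)}\leq \mu_\phi^{-1}\sigma_\mP(-\nabla f(x^{(t)}))+\xi_0$, and subadditivity of $\sigma_\mP$ splits this into $\sigma_\mP(-\nabla f(x^*))$ plus $\sigma_{\widetilde\mP}(\nabla f(x^*)-\nabla f(x^{(t)}))$. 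The latter is controlled by Lemma~\ref{lem:gradsubopt}, giving $\sigma_{\widetilde\mP}(\nabla f(x^*)-\nabla f(x^{(t)}))\leq \sqrt{L\Delta^{(t)}}$. Absorbing $\mu_\phi^{-1}$ and $\xi_0$ into the paper's shorthand $\nu_0$ and an implicit factor in front of $L$, this yields $\kappa_{\widetilde\mP}(s^{(t)})\lesssim \sigma_\mP(-\nabla f(x^*))+\nu_0+\sqrt{L\Delta^{(t)}}$.

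\textbf{Step 4: bounding $\kappa_{\widetilde\mP}(x^{(t)})$ via the recursion.} Unrolling $x^{(t)}=(1-\theta^{(t-1)})x^{(t-1)}+\theta^{(t-1)}s^{(t-1)}$ with $\theta^{(u)}=2/(u+1)$ gives $x^{(t)}=\sum_{u=1}^{t-1}w_u s^{(u)}$ with $w_u = \theta^{(u)}\prod_{v=u+1}^{t-1}(1-\theta^{(v)})$, and a direct computation (using $1-\theta^{(v)}=(v-1)/(v+1)$) collapses $w_u \propto u$ with normalization $\sum_u u$. Applying subadditivity once more and plugging in the Step~3 bound on each $\kappa_{\widetilde\mP}(s^{(u)})$ produces exactly a weighted average of the form $(\sum_u u)^{-1}\sum_u u\sqrt{\Delta^{(u)}}$, which is $\sqrt{\bar\Delta^{(t-1)}}$ by definition. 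This is where the choice $\theta^{(t)}=2/(t+1)$ is critical, and this recursive bookkeeping is the main obstacle: one has to verify that the weights really produce the stated $\bar\Delta$ and not some larger quantity.

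\textbf{Step 5: combining.} With bounds of the form $\kappa_{\widetilde\mP}(s^{(t)})\lesssim A+\sqrt{L\Delta^{(t)}}$ and $\kappa_{\widetilde\mP}(x^{(t)})\lesssim A+\sqrt{L\bar\Delta^{(t-1)}}$ where $A=\sigma_\mP(-\nabla f(x^*))+\nu_0$, apply $(a+b)^2\leq 3a^2+3b^2$ (with one factor of $3$ absorbed into the leading $6$ for the $A^2$ term, since it appears from both $s^{(t)}$ and $x^{(t)}$) to assemble
\[
L\kappa_\mP(s^{(t)}-x^{(t)})^2 \leq 6L\,(\sigma_\mP(-\nabla f(x^*))+\nu_0)^2 + 3L^2\Delta^{(t)} + 3L^2\bar\Delta^{(t-1)},
\]
and substitute into the Step~1 skeleton to obtain \eqref{eq:lem:onestepbound}. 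The only non-mechanical piece in the whole argument is Step~4; everything else is standard smoothness/convexity algebra and invocations of the already-stated Properties and Lemmas.
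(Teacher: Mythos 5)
Your proposal follows essentially the same route as the paper's proof: the same smoothness-plus-convexity skeleton with the gap identity (Lemma~\ref{lem:gapprimal}), the same bound $\kappa_\mP(s^{(t)})\leq\mu^{-1}\sigma_\mP(-\nabla f(x^{(t)}))+\nu_0$ via the nonexpansiveness of $(\phi^*)'$ combined with Lemma~\ref{lem:gradsubopt}, and the same telescoping of $\theta^{(u)}=2/(u+1)$ giving weights proportional to $u$ and hence $\bar\Delta^{(t-1)}$. The only deviation is that you square the split $\kappa(s)+\kappa(x)$ early rather than at the end, which changes the absolute constants by harmless factors --- the paper's own constants (and its dropped $\mu^{-2}$ factors) are equally loose, and only the structural form $C_1+C_2\Delta^{(t)}+C_3\bar\Delta^{(t-1)}$ is used downstream.
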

\begin{proof}
For one step, at $x=x^{(t)}$, define 
\begin{eqnarray*}
s &:=& \argmin{s}\; s^T\nabla f(x) + \phi(\kappa_\mP(s))\\
x^+ &:=& (1-\theta)x + \theta s,
\end{eqnarray*}
and $\Delta = \Delta^{(t)}$.
 Since, by smoothness of $f$,
\begin{eqnarray*}
f(x^+) - f(x) &\leq& \nabla f(x)^T(x^+-x) + \frac{L}{2}\kappa(x^+-x)^2\\
&=&\theta \nabla f(x)^T(s-x) + \frac{L\theta^2}{2}\kappa(s-x)^2,
\end{eqnarray*}
and, by convexity of $h$,
\[
h(x^+) = h((1-\theta) x + \theta s) \leq (1-\theta) h(x) + \theta h(s),
\]
then 
\[
g(x^+)- g(x) \leq \theta\underbrace{(\nabla f(x)^T(s-x) + h(s)-h(x))}_{=A} + \frac{L\theta^2}{2}\underbrace{\kappa(s-x)^2}_{=B}.
\]
\paragraph{Term $A$.}
By construction of $s$, 
\[
s^T\nabla f(x) + h(s) = h^*(-\nabla f(x)),
\]
and in general,
\[
f(x) + f^*(\nabla f(x)) =  x^T\nabla f(x),
\]
and therefore 
\[
A = \nabla f(x)^T(s-x) + h(s)-h(x) = -f(x)-h(x) - f^*(\nabla f(x) + h^*(-\nabla f(x)) = -\gap(x,-\nabla f(x)).
\]

\paragraph{Term $B$.} By convexity and homogeneity of $\kappa$,

\begin{eqnarray*}
\kappa(s^{(t)}-x^{(t)}) &\leq & \kappa(s^{(t)})+\kappa(x^{(t)})\\
&=&\kappa(s^{(t)}) + \kappa(\theta^{(t-1)}s^{(t-1)}+(1-\theta^{(t-1)})x^{(t-1)})\\
&\leq & \kappa(s^{(t)}) + \theta^{(t-1)}\kappa(s^{(t-1)})+(1-\theta^{(t-1)})\kappa(x^{(t-1)})\\
&\leq & \kappa(s^{(t)}) + \theta^{(t-1)}\kappa(s^{(t-1)})+(1-\theta^{(t-1)})\theta^{(t-2)}\kappa(s^{(t-2)})+(1-\theta^{(t-1)})(1-\theta^{(t-2)})\kappa(x^{(t-2)})\\
&\leq & \kappa(s^{(t)}) + \sum_{u=1}^{t-1} \kappa(s^{(u)})\theta^{(u)}\prod_{t'=u+1}^{t-1}(1-\theta^{(t')}).
\end{eqnarray*}

Taking $\theta^{(t)} = \frac{2}{t+1}$,  then 
\[
\theta^{(u)}\prod_{t'=u+1}^{t-1}(1-\theta^{(t')})= \frac{2 u}{t(t-1)},
\]
 so
\[
\kappa(s^{(t)}-x^{(t)}) \leq \kappa(s^{(t)}) + \frac{2}{t(t-1)}\sum_{u=1}^{t-1} \kappa(s^{(u)})u.
\]
By optimality conditions on the update for $s^{(t)}$ (\eqref{eq:genlmo} in main text), 
\[
\kappa(s) = \xi = \argmin{\xi} \; -\xi\cdot\sigma(z) + \phi(\xi) \iff \sigma(z) = \phi'(\xi) \iff \xi = (\phi^*)'(\sigma(z)) \overset{(a)}{\leq} \mu^{-1}\sigma(z)+\nu_0,
\]

where (a) follows from Assumption \ref{asspt:phi2}. 
Then
\[
\kappa(s) \leq \mu^{-1}\sigma(z)+\nu_0 \leq  \mu^{-1}\sigma(z^*) + \mu^{-1}\sigma(z-z^*)+\nu_0 \overset{(b)}{\leq} \mu^{-1}\sigma(z^*) +\nu_0+\mu^{-1} \sqrt{L\Delta},
\]
where (b) follows from Lemma \ref{lem:gradsubopt}.
Overall this gives 
\begin{eqnarray*}
\kappa(s^{(t)}-x^{(t)})^2 &\leq& \mu^{-2}\left(\sigma(z^*) +\nu_0+ \sqrt{L\Delta^{(t)}} +   (\sigma(z^*) +\nu_0)+
\sqrt{L} \left(\sum_{u=1}^{t-1}  u\right)^{-1}
\left(\sum_{u=1}^{t-1} u \sqrt{\Delta^{(u)}} \right)\right)^2\\
 &\overset{(c)}{\leq}& 6\mu^{-2}(\sigma(z^*)+\nu_0)^2 + 3\mu^{-2}L\Delta^{(t)} +3\mu^{-2}L
\left(\left(\sum_{u=1}^{t-1}  u\right)^{-1}
\left(\sum_{u=1}^{t-1} u \sqrt{\Delta^{(u)}} \right)\right)^2,
\end{eqnarray*}
where (c) comes from $(\sum_{i=1}^m c_i)^2\leq m\sum_{i=1}^m c_i^2$.
\end{proof}

\begin{lemma}[Objective value bound]
\label{lem:objvalbound}
Given $f$ is $L$-smooth with respect to $\widetilde\mP$ and $\phi:\R_+\to\R_+$ is monotonically increasing and $\mu$-strongly convex, then the objective error decreases as 
\[
g(x^{(t)})-g(x^*) = O(1/t).
\]
\end{lemma}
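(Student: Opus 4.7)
The plan is to reduce the statement to a scalar recursion in $\Delta^{(t)} := g(x^{(t)}) - g(x^*)$, and then close it by induction with the ansatz $\Delta^{(t)} \leq C/t$.

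First, I would lower bound the gap by the suboptimality. By weak duality (or equivalently, by Fenchel--Young applied to $f$ at $x^{(t)}$ together with the definition of $s^{(t)}$), we have
\[
\gap(x^{(t)},-\nabla f(x^{(t)})) \;\geq\; g(x^{(t)}) - g(x^*) \;=\; \Delta^{(t)}.
\]
Substituting this into the one-step bound of Lemma~\ref{lem:onestepbound} and rewriting the left-hand side as $\Delta^{(t+1)} - \Delta^{(t)}$, with $\theta^{(t)}=2/(t+1)$, I obtain a recursion of the form
\[
\Delta^{(t+1)} \;\leq\; (1-\theta^{(t)})\Delta^{(t)} + \tfrac{1}{2}(\theta^{(t)})^2\bigl(A + B\,\Delta^{(t)} + B\,\bar\Delta^{(t-1)}\bigr),
\]
where $A := 6L(\sigma_{\widetilde\mP}(-\nabla f(x^*))+\nu_0)^2$ and $B := 3L^2$ are constants depending only on the problem, not on $t$.

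Next I would carry out the induction. The base case is routine (bound $\Delta^{(1)}$ directly from $L$-smoothness applied to the first step). For the inductive step, assume $\Delta^{(u)} \leq C/u$ for all $u \leq t$ with a constant $C$ to be chosen. The standard Frank-Wolfe algebra handles the first two terms: $(1-\theta^{(t)})\tfrac{C}{t} + \tfrac{A}{2}(\theta^{(t)})^2 \leq \tfrac{C}{t+1}$ as soon as $C \geq 2A$ (using $2/(t+1)^2 \leq 1/(t(t+1))$). The two extra terms $\tfrac{B}{2}(\theta^{(t)})^2\Delta^{(t)}$ and $\tfrac{B}{2}(\theta^{(t)})^2\bar\Delta^{(t-1)}$ must be shown to be of lower order. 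For the first, the inductive hypothesis gives $(\theta^{(t)})^2\Delta^{(t)} = O(1/t^3)$, which is absorbed by enlarging $C$. For the second, the definition $\sqrt{\bar\Delta^{(t-1)}} = (\sum_{u=1}^{t-1} u)^{-1}\sum_{u=1}^{t-1} u\sqrt{\Delta^{(u)}}$ combined with $\sqrt{\Delta^{(u)}}\leq\sqrt{C/u}$ yields $\sum_{u=1}^{t-1} u\sqrt{\Delta^{(u)}} = O(t^{3/2})$, and dividing by $\sum_{u=1}^{t-1} u = \Theta(t^2)$ gives $\bar\Delta^{(t-1)} = O(1/t)$; hence $(\theta^{(t)})^2\bar\Delta^{(t-1)} = O(1/t^3)$, again absorbable. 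Choosing $C$ sufficiently large to swallow both lower-order contributions closes the induction and yields $\Delta^{(t)} = O(1/t)$.

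The main obstacle is the self-referential nature of the recursion: the ``constant'' in the one-step bound contains $\Delta^{(t)}$ and the history-average $\bar\Delta^{(t-1)}$, so unlike the vCGM analysis where the per-step quadratic is uniformly bounded by a diameter, here we must verify that the inductive ansatz is consistent with the averaged past suboptimalities appearing on the right-hand side. The key calculation is estimating $\bar\Delta^{(t-1)}$ under the ansatz, which, because of the square roots and the weighting by $u$, requires comparing $\sum u/\sqrt{u}$ to $\sum u$ as above; once this is done, both ``bad'' terms are $O(1/t^3)$ and the standard Frank-Wolfe induction goes through with an enlarged but finite constant $C$ depending on $L$, $\mu_\phi$, $\sigma_{\widetilde\mP}(-\nabla f(x^*))$ and $\nu_0$.
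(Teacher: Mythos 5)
Your route is the same as the paper's: lower-bound the gap by the suboptimality $\Delta^{(t)}$, plug into Lemma~\ref{lem:onestepbound} to get a scalar recursion, and close an induction with the ansatz $\Delta^{(u)}\leq C/u$, estimating the averaged term $\bar\Delta^{(t-1)}$ by comparing $\sum_{u\leq t} u\cdot u^{-1/2}$ to $\sum_{u\leq t}u$ exactly as the paper does. That part is fine.

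There is, however, one step that fails as written: the claim that the terms $\tfrac{B}{2}(\theta^{(t)})^2\Delta^{(t)}$ and $\tfrac{B}{2}(\theta^{(t)})^2\bar\Delta^{(t-1)}$ are ``absorbed by enlarging $C$.'' Under the ansatz these terms are of size $O(BC/t^3)$, i.e.\ they carry the factor $C$, so enlarging $C$ scales them in lockstep with the slack $\sim C/t^2$ you are trying to hide them in. Concretely, the inductive step reduces to $2At + 4BC \leq C(t+1)$, which for $t+1\leq 4B$ is false for \emph{every} choice of $C$; since $B$ is of order $L^2/\mu_\phi^2$, this is not vacuous. The correct mechanism (and the one the paper uses) is to let the induction start only after a threshold $\bar t$ depending on $B$ alone (the paper takes $\bar t > 12B$ so that $(\theta^{(t)})^2 B$ is a small fraction of $\theta^{(t)}$), and to handle all $t\leq\bar t$ in the base case by choosing $C$ (the paper's $G$) large enough that $\Delta^{(t)}\leq C/t$ on that finite initial segment, which is possible because each iterate, hence each $\Delta^{(t)}$, is finite. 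Your base case covers only $t=1$, so as stated the induction does not get off the ground when $B>1/2$; with the threshold-plus-extended-base-case fix, your argument coincides with the paper's proof.
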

\begin{proof}
Take $\bar t > 12B$ large enough so that for all $t \geq \bar t$, $\frac{3L^2}{2\mu^2}(\theta^{(t)})^2\leq  \theta^{(t)}/3$. Then define 
\[
A = \frac{3L}{\mu^2}\sigma(-\nabla f(x^*)+\nu_0)^2, \qquad B = 3L^2\mu^{-2}.
\]
Then, using Lemma \eqref{lem:onestepbound}, we have
\[
\Delta^{(t+1)}-\Delta^{(t)} \leq -\frac{1}{2}\theta^{(t)}\Delta^{(t)} + (\theta^{(t)})^2(A+B\bar\Delta^{(t)}).
\]
We now pick $G$ large enough such that for all $t \leq \bar t$, $\Delta^{(t)} \leq G/t$, and $G > 24A$. Since $\Delta^{(t)}$ is always a bounded quantity, this is always possible. Then, for all $t < \bar t$, 
\[
\sqrt{\bar \Delta^{(t)}} \leq \frac{\sqrt{G}}{t(t+1)}\sum_{t'=1}^t \sqrt{t'}\overset{(a)}{\leq} \frac{2\sqrt{G}}{3t(t+1)}t^{3/2},
\]
where (a) is by  integral rule, and so
\[
\bar \Delta^{(t)} \leq  \frac{4Gt}{9(t+1)^2} \leq \frac{G}{2t}.
\]
Now we make an inductive step. Suppose that for some $t$, $\Delta^{(t')} < G/t'$ for all $t' \leq t$. Pick $\theta^{(t)} = 2/(t+1)$. Then 
\begin{eqnarray*}
\Delta^{(t+1)} &\leq& \Delta^{(t)}-\frac{2}{3}\theta^{(t)}\Delta^{(t)} + (\theta^{(t)})^2(A+B\bar\Delta^{(t)})\\
&\leq & \frac{G}{t} - \frac{2}{3} \frac{2G}{t+1}\frac{1}{t} + \frac{4}{(t+1)^2}\left(A + \frac{GB}{2t}\right)\\
&= & \frac{G}{t+1}\left(\frac{t+1}{t} - \frac{4}{3t} + \frac{4A}{(t+1)G} + \frac{2B}{t(t+1)} \right)\\
&\leq & \frac{G}{t+1}\left(1 - \frac{1}{3t}  + \frac{4A}{tG} + \frac{2B}{t^2} \right)\\
&=& \frac{G}{t+1}\left(1 + \frac{1}{t} \left(-\frac{1}{3} + \underbrace{ \frac{4A}{G} }_{<1/6}+ \underbrace{\frac{2B}{t}}_{< 1/6}\right) \right) \leq \frac{G}{t+1},
\end{eqnarray*}
which satisfies the inductive step.
\end{proof}

The following is a generalized and modified version of a proof segment from \cite{jaggi2013revisiting}, which will be used for proving $O(1/t)$ gap convergence.
\begin{lemma}
\label{lem:gaphelper1}
Pick some $0 < T_2 < T_1$ and pick  
\[
\bar k = \ceil{D(k+D)/(D+T_1)}-D \;\Rightarrow\;  \frac{D}{D+T_1} \leq \frac{\bar k+D}{k+D} \leq \frac{D}{D+T_2}.
\]
Then if
\[
 \frac{C_1(D+T_1)}{D} \leq C_3\cdot \log\left(\frac{D+T_2}{D}\right),
 \]
  then for all $k > T_1$,
\[
 \left(\frac{C_1}{D+\bar k} + \sum_{i=\bar k}^{k} \frac{C_2}{(D+i)^2}-\frac{C_3}{D+i}\cdot \frac{1}{D+k}\right)  < 0.
\]
\end{lemma}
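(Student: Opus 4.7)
The plan is to compare the three terms in the expression on a common scale of $(D+k)^{-1}$: multiply through by $D+k$, bound each piece separately using integral estimates and the two-sided bound on $(\bar k + D)/(k+D)$, and then invoke the hypothesis relating $C_1$, $T_1$, $T_2$, and $C_3$.

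First I would verify the two-sided bound $D/(D+T_1) \leq (\bar k + D)/(k+D) \leq D/(D+T_2)$. The lower bound is immediate from the definition of the ceiling, since $\bar k + D = \lceil D(k+D)/(D+T_1) \rceil \geq D(k+D)/(D+T_1)$. The upper bound uses the fact that the ceiling adds at most $1$, so $\bar k + D \leq D(k+D)/(D+T_1) + 1$; combined with $k > T_1$ (taken sufficiently large so that $(k+D)^{-1}$ is absorbed by the gap $D(T_1 - T_2)/[(D+T_1)(D+T_2)]$), this gives the claim.

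Next I would handle each of the three terms after multiplying by $D+k$. The first term gives $C_1(D+k)/(D+\bar k) \leq C_1 (D+T_1)/D$ by the upper bound on the ratio. The middle term is $(D+k) \sum_{i=\bar k}^{k} C_2/(D+i)^2$; the integral comparison $\sum_{i=\bar k}^k (D+i)^{-2} \leq (D + \bar k - 1)^{-1} - (D+k)^{-1}$ together with $(D+k)/(D+\bar k - 1) \leq (D+T_1)/D + o(1)$ bounds this by a constant multiple of $(D+T_1)/D$. The third term, $\sum_{i=\bar k}^{k} C_3/(D+i)$, is bounded below by the integral estimate $\int_{\bar k}^{k+1} (D+x)^{-1}\,dx = \log((D+k+1)/(D+\bar k)) \geq \log((D+T_2)/D)$, using the lower bound on the ratio.

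Assembling, the inequality reduces to a constant comparison in which the left-hand side is at most $[C_1 + O(C_2)](D+T_1)/D$ and the right-hand side is at least $C_3 \log((D+T_2)/D)$, yielding strict negativity via the hypothesis. The main obstacle is the middle ($C_2$) term: the stated hypothesis only controls the $C_1$ contribution, so to close the argument one must either absorb the $C_2$ contribution into a modified $C_1$, exploit additional smallness built into the calling application, or sharpen the integral bound for the $(D+i)^{-2}$ sum to make its contribution vanish. A minor technicality is the quantification of ``sufficiently large $k$'' beyond $k > T_1$ used in the upper bound on $(\bar k + D)/(k+D)$; this is harmless but should be tracked to ensure the estimate is not vacuous for small $k$.
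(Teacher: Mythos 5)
Your route is essentially the paper's own: the same term-by-term treatment (you rescale by $D+k$, the paper keeps the common factor $1/(D+k)$, but the estimates are identical), the same integral comparisons for the two sums, and the same use of the two-sided bound on $(\bar k+D)/(k+D)$ to reduce everything to the constant hypothesis. Two small points of precision: the lower bound $\sum_{i=\bar k}^{k}(D+i)^{-1}\geq \log\frac{D+k}{D+\bar k}\geq \log\frac{D+T_2}{D}$ uses the \emph{upper} bound $(\bar k+D)/(k+D)\leq D/(D+T_2)$ (equivalently a lower bound on $(D+k)/(D+\bar k)$), not the lower one as you wrote; and your caveat that $k>T_1$ alone does not force the ceiling's ``$+1$'' to be absorbed into that upper ratio bound is a genuine imprecision of the statement itself, which the paper sidesteps by simply asserting the two-sided bound as part of the definition of $\bar k$, so treating it as given is acceptable.

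The obstacle you flag on the $C_2$ term is real, and it is not a defect of your attempt: the paper's proof disposes of that term only through a sign error. It evaluates the integral comparison as $\sum_{i=\bar k}^{k}\frac{C_2}{(D+i)^2}\leq \frac{C_2}{D-1+k}-\frac{C_2}{D-1+\bar k}$ and then drops this as negative; but a sum of positive terms cannot be bounded above by a negative quantity, and the correct evaluation is $\frac{C_2}{D-1+\bar k}-\frac{C_2}{D-1+k}$, which is positive and of the same order $1/(D+k)$ as the other two terms (roughly $C_2T_1/(D(D+k))$ by the ratio bounds). Consequently the hypothesis as stated, which involves only $C_1$ and $C_3$, does not suffice for arbitrary $C_2$; the repair you suggest, absorbing the $C_2$ contribution into an enlarged $C_1$ (i.e.\ requiring $\frac{(C_1+cC_2)(D+T_1)}{D}\leq C_3\log\frac{D+T_2}{D}$ for a constant $c$ depending on $D,T_1$), is exactly right. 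In the only place the lemma is invoked (Lemma \ref{lem:gapbound}, where $C_2=\alpha G_1G_2+G_3G_2^2$ and $C_3=G_2G_4$ with $G_4$ a free constant), this amounts to enlarging $G_4$, so the downstream $O(1/t)$ gap rate in Theorem \ref{th:convergence} is unaffected.
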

\begin{proof}
Using integral rule, we see that
\[
\sum_{i=\bar k}^{k} \frac{1}{(D+i)^2}\leq  \int_{z=\bar k -1}^{k-1} \frac{1}{(D+i)^2} = \frac{1}{D-1+ k}-\frac{1}{D-1+\bar k}
\]

\[
\sum_{i=\bar k}^{k} \frac{1}{D+i}\geq  \int_{z=\bar k }^{k} \frac{1}{D+i} = \log(D+k)-\log(D+\bar k).
\]
This yields 
\begin{eqnarray*}
c(k) &:=&  \frac{C_1}{D+\bar k} + \sum_{i=\bar k}^{k} \frac{C_2}{(D+i)^2}-\frac{1}{D+i}\cdot \frac{C_3}{D+k}\\
&\leq &  \frac{C_1}{D+\bar k} + \frac{C_2}{D-1+ k}-\frac{C_2}{D-1+\bar k} + \frac{C_3}{D+k}\cdot (\log(D+\bar k) - \log(D+k))\\
&\leq &  \frac{C_1(D+T_1)}{D(D+k)} + \underbrace{\frac{C_2}{D-1+ k}-\frac{C_2}{D-1+\bar k}}_{<0} + \frac{C_3}{D+k}\cdot \log\left(\frac{D}{D+T_2}\right)\\
&\leq &  \frac{C_1(D+T_1)}{D(D+k)} + \frac{C_3}{D+k}\cdot \log\left(\frac{D}{D+T_2}\right)< 0.
\end{eqnarray*}
\end{proof}

\begin{lemma}
[Generalized non-monotonic gap bound]
\label{lem:gapbound}
Given 
\begin{itemize}
\item 
$\Delta^{(t)} := g(x^{(t)}) - g(x^*) \leq \frac{G_1}{t+D}$ for some $G_1$, 
\item $\theta^{(t)} = \frac{G_2}{t+D}$ for some $G_2$ and $D$, and 
\item $\Delta^{(t+1)} - \Delta^{(k)}(1+\alpha \theta^{(k)}) \leq -\theta^{(k)}\gap(k) + (\theta^{(k)})^2 G_3$ for some $G_3$,
\end{itemize}
then for 
\[
G_4 \geq \frac{G_1}{G_2}  \frac{(D+2)}{D( \log\left(\frac{D+1}{D}\right) )} ,
\]
we have
\[
\min_{i\leq t}\gap^{(i)} \leq \frac{G_4}{t+D}.
\]
\end{lemma}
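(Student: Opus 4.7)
The plan is to argue by contradiction. Suppose, toward a contradiction, that $\min_{i \leq t}\gap(i) > G_4/(t+D)$, so that $\gap(k) > G_4/(t+D)$ for every $k$ in the window I will telescope over. I will telescope the given one-step descent inequality across $[\bar k, t]$, where $\bar k$ is the index supplied by Lemma~\ref{lem:gaphelper1}, and show that the result forces $\Delta^{(t+1)} < 0$, contradicting nonnegativity of the duality gap.

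First, I would rearrange the one-step inequality into
\[
\theta^{(k)}\gap(k) \;\leq\; \Delta^{(k)}(1+\alpha\theta^{(k)}) - \Delta^{(k+1)} + (\theta^{(k)})^2 G_3 ,
\]
and sum from $k=\bar k$ to $k=t$. The $\Delta^{(k)}-\Delta^{(k+1)}$ part telescopes (modulo the $1+\alpha\theta^{(k)}$ factor, absorbed into a bounded multiplicative constant by writing $\prod_{k=\bar k}^{t}(1+\alpha\theta^{(k)}) \leq \exp(\alpha G_2 \log((t+D)/(\bar k+D)))$ and noting this ratio is $O(1)$ for the chosen $\bar k$). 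Combining this with $\Delta^{(\bar k)} \leq G_1/(\bar k + D)$ from the hypothesis and $\Delta^{(t+1)} \geq 0$, and lower-bounding the left side by $\frac{G_4}{t+D}\sum_{k=\bar k}^{t}\theta^{(k)}$ under the contradiction assumption, I obtain, after substituting $\theta^{(k)} = G_2/(k+D)$ and dividing through by $G_2/(D+t)$, an inequality of precisely the form
\[
0 \;\leq\; \frac{C_1}{D+\bar k} + \sum_{i=\bar k}^{t}\frac{C_2}{(D+i)^2} - \sum_{i=\bar k}^t \frac{1}{D+i}\cdot\frac{C_3}{D+t} ,
\]
with $C_1 = G_1/G_2$, $C_2 = G_2 G_3$, and $C_3 = G_4$.

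Next, I would choose the window parameters $T_1 = 2$ and $T_2 = 1$ in Lemma~\ref{lem:gaphelper1}, whose conclusion is that the right-hand side above is strictly negative once the threshold condition $C_1(D+T_1)/D \leq C_3 \log((D+T_2)/D)$ holds. Unpacking this condition in our constants gives exactly
\[
G_4 \;\geq\; \frac{G_1}{G_2}\cdot\frac{D+2}{D\log((D+1)/D)} ,
\]
which is the hypothesis of the lemma. This contradicts the displayed inequality (whose left side is $\geq 0$), so the assumption $\min_{i \leq t}\gap(i) > G_4/(t+D)$ is false, yielding the claim.

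The main obstacle I anticipate is the $(1+\alpha\theta^{(k)})$ cross-terms in the telescoping step: unlike a clean monotone descent, each $\Delta^{(k)}$ picks up a growing multiplicative factor, and one must verify that the product $\prod_{k=\bar k}^{t}(1+\alpha\theta^{(k)})$ stays bounded by an absolute constant, so that it can be absorbed into $C_1$ without altering the order. For $\theta^{(k)} = G_2/(k+D)$ this reduces to checking $\exp(\alpha G_2 \log((t+D)/(\bar k + D))) = ((t+D)/(\bar k+D))^{\alpha G_2}$, which is $O(1)$ once $\bar k = \Theta(t)$, as ensured by Lemma~\ref{lem:gaphelper1}'s choice of $\bar k$. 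Apart from this bookkeeping, everything reduces to the two integral estimates already packaged in Lemma~\ref{lem:gaphelper1}.
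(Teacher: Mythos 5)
Your overall strategy (contradiction, telescoping the one-step recursion over a window $[\bar k,t]$ chosen as in Lemma~\ref{lem:gaphelper1} with $T_1=2$, $T_2=1$, and invoking that lemma to force $\Delta^{(t+1)}<0$) is the same as the paper's. The divergence, and the gap, is in how you handle the term $\alpha\theta^{(k)}\Delta^{(k)}$. You propose to telescope with the multiplicative factors $(1+\alpha\theta^{(k)})$ and absorb the product $\prod_{k=\bar k}^{t}(1+\alpha\theta^{(k)})\le\bigl(\tfrac{t+D}{\bar k+D}\bigr)^{\alpha G_2}\le\bigl(\tfrac{D+2}{D}\bigr)^{\alpha G_2}=:M$ into the constants. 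That product bound is correct (the window choice does make $\bar k=\Theta(t)$), but you cannot then claim the resulting inequality has $C_1=G_1/G_2$: the factor $M$ multiplies $\Delta^{(\bar k)}$, so $C_1$ becomes $M\,G_1/G_2$. Since the threshold condition of Lemma~\ref{lem:gaphelper1} involves $C_1$ and $C_3$ (but not $C_2$), your route only establishes the conclusion under $G_4\ge M\,\frac{G_1}{G_2}\,\frac{D+2}{D\log((D+1)/D)}$, which for $\alpha>0$ is strictly stronger than the hypothesis of Lemma~\ref{lem:gapbound}. As written, your bookkeeping is inconsistent: either the $(1+\alpha\theta^{(k)})$ factors are genuinely accounted for (and then the stated threshold on $G_4$ does not suffice), or the $\alpha\theta^{(k)}\Delta^{(k)}$ term has silently been dropped. (Minor additional slips: you should divide by $G_2$, not by $G_2/(D+t)$, to reach the normalized form; and your $C_2=G_2G_3$ also misses the $\alpha$-contribution, though that is harmless because $C_2$ never enters the threshold condition.)

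The paper avoids this entirely by using the first hypothesis inside the sum: since $\Delta^{(i)}\le G_1/(i+D)$ and $\theta^{(i)}=G_2/(i+D)$, the troublesome term satisfies $\alpha\theta^{(i)}\Delta^{(i)}\le \alpha G_1G_2/(i+D)^2$, so it is folded into the $1/(D+i)^2$ bucket, giving $C_1=G_1$, $C_2=\alpha G_1G_2+G_3G_2^2$, $C_3=G_2G_4$. Because Lemma~\ref{lem:gaphelper1}'s condition does not involve $C_2$, this reproduces exactly the stated requirement $G_4\ge\frac{G_1}{G_2}\frac{D+2}{D\log((D+1)/D)}$. If your only goal were the downstream $O(1/t)$ rate in Theorem~\ref{th:convergence}, your constant-inflated version would still suffice; but to prove the lemma as stated, replace the product-absorption step by this direct substitution of the hypothesis bound on $\Delta^{(i)}$.
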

\begin{proof}  
 We have
\[
\Delta^{(t+1)}-\Delta^{(t)}\leq \alpha \theta^{(t)}\Delta^{(t)} - \theta^{(t)}\gap^{(t)} + G_3(\theta^{(t)})^2.
\] 
Now assume that for all $i \leq t$, $\gap^{(i)} > \frac{G_4}{t+D}$. Then, telescoping from $\bar t$ to $t$ gives 
\begin{eqnarray*}
\Delta^{(t+1)}  &\leq& \Delta^{(\bar t)} + \sum_{i=\bar t}^t \left( \alpha \theta^{(i)}\Delta^{(i)} - \theta^{(i)}\gap^{(i)} + G_3(\theta^{(i)})^2\right)\\
& < & \frac{G_1}{\bar t + D} + \sum_{i=\bar t}^t \left( \alpha \frac{G_1G_2}{(i+D)^2}- \frac{G_2}{i+D}\frac{G_4}{t+D} + \frac{G_3 G_2^2}{(i+D)^2}\right).
\end{eqnarray*}

Picking $C_1 = G_1$, $C_2 =\alpha G_1G_2+G_3G_2^2$, $C_3 = G_2G_4$, and invoking Lemma \ref{lem:gaphelper1}, this yields that $\Delta^{(t+1)} < 0$, which is impossible. Therefore, the assumption must not be true. 
\end{proof}

\begin{customthm}{\ref{th:convergence}}[Convergence]
Suppose that $x^{(t)}$ are the iterates of gCGM for which $f$ is $L$-smooth  with respect to $\widetilde \mP$,  $\phi:\R_+\to\R_+$ is monotonically increasing and $\mu$-strongly convex.  Take $\theta^{(t)} = 4/(t+2)$. Then 
\[
 f(x^{(t)}) - f(x^*)  = O(1/t)
\]
and
\[
\min_{i\leq t} \gap(x^{(i)},-\nabla f(x^{(i)})) = O(1/t).
\]
\end{customthm}
\begin{proof}
The proof follows from Lemma \ref{lem:objvalbound} and \ref{lem:gapbound}.
\end{proof}

\begin{customlem}{\ref{lem:gapboundsres}}[Gap bounds residual]
For any primal feasible variable $x$, 
\[
\res(x) \leq \sqrt{L\, \gap(x,-\nabla f(x))}.
\]
\end{customlem}

\begin{proof}
Taking $g(x) =  \phi(\kappa_\mP(x))$, we have
\[
g^*(z) = \sup_y\; y^Tz - \phi(\kappa_\mP(y)) \geq (x^*)^Tz - \phi(\kappa_\mP(x^*)).
\]
Additionally, by Fenchel-Young, $f(x) + f^*(-u) \geq -x^Tu$
Therefore 
\begin{eqnarray*}
\gap(x,u) &=& f(x) + \phi(\kappa_\mP(x)) + f^*(-u) + g^*(u)\\
&\geq & -(x-x^*)^Tu + \phi(\kappa_\mP(x))  - \phi(\kappa_\mP(x^*))\\
&\overset{\phi \text{ convex}}{\geq} & -(x-x^*)^T u + w^T (x-x^*)
\end{eqnarray*}
for some $w \in \partial (\phi \circ \kappa_\mP)(x^*))$.

Take $u = -\nabla f(x)$. Then 
\begin{eqnarray*}
\gap(x,-\nabla f(x)) &\geq& (x-x^*)^T( \nabla f(x) -\nabla f(x^*))  + \underbrace{(x-x^*)^T( \nabla f(x^*)+ w ) }_{\geq 0}\\
&\geq & \frac{1}{L_e}\sigma_{\widetilde\mP}(\nabla f(x^*)-\nabla f(x))^2.
\end{eqnarray*}
\end{proof}

\begin{customthm}{\ref{th:screening}}[Dual screening]
Assume that $f$ is $L$-smooth with respect to $\widetilde \mP$. Then
for any $x$, any $p\in \mP_0$, 
\begin{equation}
\sigma_\mP(-\nabla f(x)) + p^T\nabla f(x) > 2\sqrt{L\gap(x,-\nabla f(x))}
\label{eq:gaprule2}
\end{equation}
implies that $p\not\in \supp_\mP(x^*)$,
where $x^*$ is the optimal variable in \eqref{eq:main}.
\end{customthm}

\begin{proof}
From Lemma \ref{lem:gapboundsres}, we have that when condition \eqref{eq:gaprule2} holds,
\[
\sigma_\mP(-\nabla f(x)) + p^T\nabla f(x) > 2\sigma_{\widetilde\mP}(\nabla f(x)-\nabla f(x^*)).
\]
Then, by the triangle inequality, 
\[
\begin{split}
&\sigma_\mP(-\nabla f(x^*))+p^T\nabla f(x^*) \\
&\quad \geq\sigma_\mP(-\nabla f(x)) +p^T\nabla f(x) 
 -  2\sigma_{\widetilde\mP}(\nabla f(x)-\nabla f(x^*)) \\
&\quad > 0.
\end{split}
\]
Thus by Property \ref{prop:support_opt}, $p\not\in \supp_\mP(x^*)$.
\end{proof}

\end{appendices}

\newpage
\bibliography{refs}
\bibliographystyle{apalike}

\end{document}